\documentclass{article}
\usepackage[preprint,nonatbib]{neurips_2026}
\usepackage[round]{natbib}

\usepackage[utf8]{inputenc}
\usepackage[T1]{fontenc}
\usepackage{fix-cm}
\usepackage{hyperref}
\hypersetup{hidelinks}
\usepackage{url}
\usepackage{booktabs}
\usepackage{array}
\usepackage{amsmath}
\usepackage{amssymb}
\usepackage{amsthm}
\usepackage{thm-restate}
\usepackage{amsfonts}
\usepackage{mathtools}
\usepackage{nicefrac}
\usepackage{microtype}
\usepackage{cancel}
\usepackage{textcomp}
\usepackage{xcolor}
\usepackage{graphicx}
\usepackage{subcaption}
\usepackage{needspace}
\usepackage{float}
\usepackage{cleveref}

\newcommand{\Elog}{\mathcal{E}_{\mathrm{logmap}}}
\newcommand{\Rlog}{\mathcal{R}_{\mathrm{logmap}}}
\newcommand{\Eexp}{\mathcal{E}_{\mathrm{expmap}}}
\newcommand{\Rexp}{\mathcal{R}_{\mathrm{expmap}}}

\DeclareMathOperator{\diag}{diag}
\DeclareMathOperator{\tr}{tr}

\newcommand{\R}{\mathbb{R}}
\newcommand{\M}{\mathcal{M}}

\newcommand{\X}{\mathcal{X}}
\newcommand{\Z}{\mathcal{Z}}
\newcommand{\Ss}{\mathcal{S}}

\newcommand{\x}{\mathbf{x}}
\newcommand{\y}{\mathbf{y}}
\newcommand{\s}{\mathbf{s}}
\newcommand{\e}{\mathbf{e}}
\newcommand{\z}{\mathbf{z}}
\newcommand{\p}{\mathbf{p}}
\newcommand{\uu}{\mathbf{u}}
\newcommand{\zero}{\mathbf{0}}
\newcommand{\one}{\mathbf{1}}

\newcommand{\circlenum}[1]{\textcircled{\scriptsize #1}}

\newtheorem{proposition}{Proposition}
\newtheorem{lemma}[proposition]{Lemma}
\newtheorem{definition}[proposition]{Definition}

\begin{document}

\title{Learning Normal Coordinates on Riemannian Manifolds with Flow Charts}
\author{%
  Edmond Cunningham \\
  University of Massachusetts Amherst \\
  \texttt{edmondcunnin@umass.edu}
}
\date{}
\maketitle

\begin{abstract}
Riemannian manifolds are powerful models for problems with non-Euclidean structure.
Recent work in geometric machine learning has focused on training deep neural networks to learn Riemannian metrics from high-dimensional data.
However, many geometric algorithms require repeated evaluations of the exponential and logarithmic maps, whose high computational cost restricts the utility of learned Riemannian manifolds in practice.
We propose to address this cost by training a conditional invertible neural network, called a flow chart, to approximate normal coordinates around points on a given Riemannian manifold.
With this approach, the exponential and logarithmic maps can be approximated by fast Euclidean operations in the model's learned coordinate space.
Our main contribution is an efficient loss function for training flow charts that is nonnegative and, under suitable domain assumptions, is zero precisely when the chart is normal.
Across sixteen synthetic and data-derived geometries, flow charts are competitive with a specialized neural geodesic interpolator for logarithmic map estimation, while also being able to estimate the exponential map.
When pretrained, flow charts provide a median speedup of $60$--$70\times$ over numerical baselines at comparable logarithmic map estimation error, and their learned coordinates improve the accuracy of a numerical geodesic solver.
\end{abstract}

\section{Introduction}

\begin{figure*}[t]
    \centering
    \includegraphics[width=\textwidth,height=0.31\textheight,keepaspectratio]{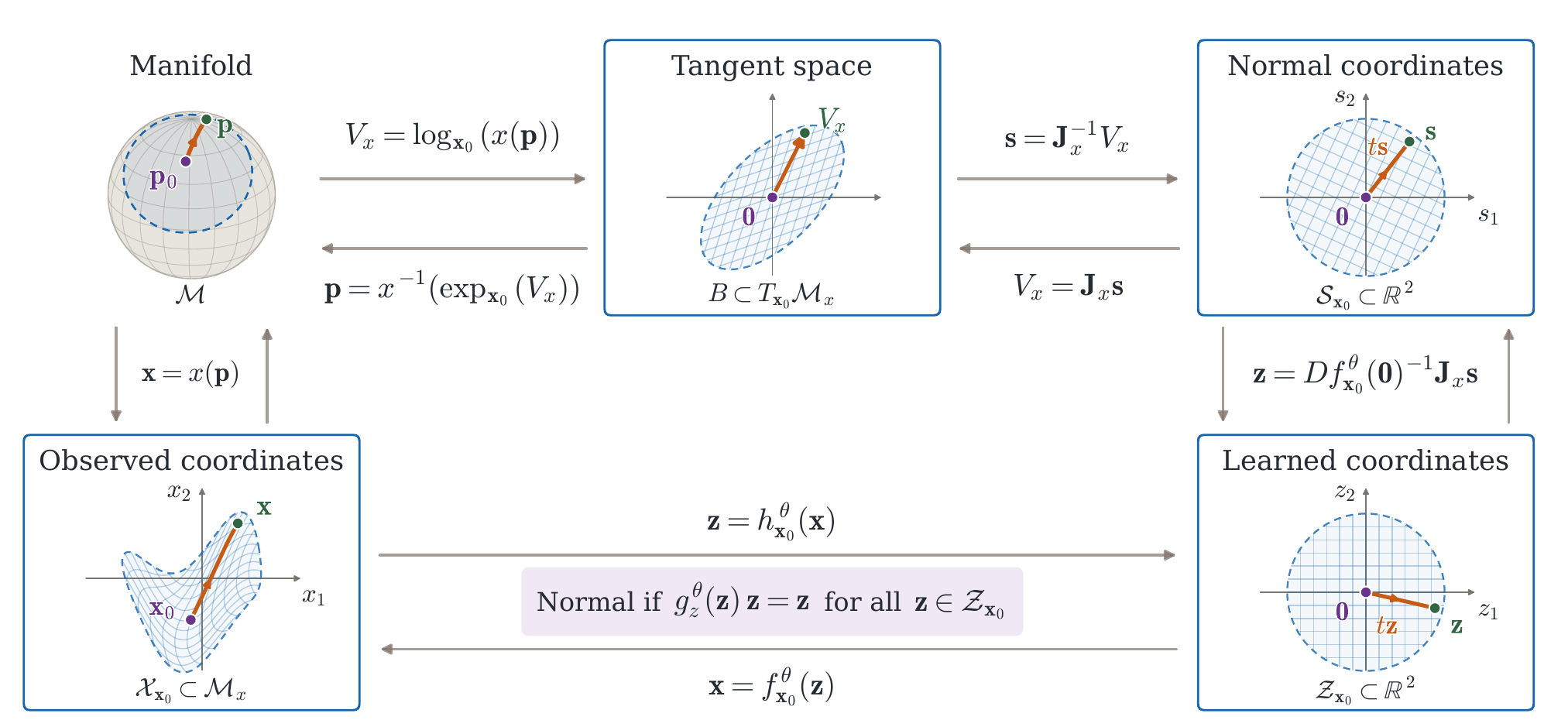}
    \caption{Illustration of normal coordinate charts and our method, using our notation in \Cref{sec:background} and \Cref{sec:method}.
    The purple and green symbols correspond to the same points $\p_0$ and $\p$ on the manifold in the top left, and the red curve is the geodesic between them.
    The tangent-space and coordinate plots depict the same blue region of the manifold in different numerical representations.
    The top row shows the standard approach to defining normal coordinates through the logarithmic map and the bottom shows our approach.
    Our method introduces flow charts and trains them to satisfy the identity in purple to learn normal coordinates.}
    \label{fig:normal-coordinates-flow-chart}
\end{figure*}

Many real-world problems can be modeled effectively with Riemannian manifolds whose metric encodes feasible state transitions \citep{diepeveen2024riemannian,wurgaft2026manifold,gaga}.
These transitions are represented by shortest paths, called geodesics, so that problems like trajectory inference \citep{kapusniak2024metric} and probabilistic inference \citep{girolami2011riemann,williams2025geodesic} can be expressed through geodesic computations.
When the geometry is known a priori through domain constraints, geodesics can be computed in closed form or simulated numerically \citep{lee2018introduction,miolane2020geomstats}.
However, recent work in geometric machine learning has focused on learning metrics directly from data \citep{gaga,jones2026computing,bamberger2026metricmatching} using neural networks.
While these methods offer exciting new avenues for geometric deep learning \citep{bronstein2021geometric}, these neural metrics present computational challenges that limit their downstream utility.

The two computations that are fundamental to Riemannian algorithms are the exponential and logarithmic maps.
These maps play the role of addition and subtraction of points on a manifold, so Riemannian extensions of classical algorithms such as PCA and regression \citep{fletcher2011geodesic,fletcher2004principal} evaluate them repeatedly.
On general manifolds, neither map has a closed form and both must be solved for numerically.
The exponential map is often computed as the solution to a differential equation, like the geodesic equation or Hamilton's equations \citep{hairer2006geometric}, both of which can be simulated efficiently with automatic differentiation \citep{williams2025geodesic}.
However, the simulation updates depend on the metric's derivative, which can be unreliable for learned metrics whose value may change rapidly \citep{arvanitidis2019fast}.
The logarithmic map is the inverse of the exponential map, and requires solving for the geodesic interpolation between two points at a time \citep{hennig2014probabilistic,rygaard2026georce}.
Although this cost can be amortized with neural networks \citep{kapusniak2024metric,kelshaw2024computing}, these methods take both endpoints as input and so they are not able to evaluate the exponential map.
We explore a different approach to computing both maps based on learning a coordinate system that simplifies their computations, called Riemann normal coordinates.

Normal coordinates are an important tool for calculations in Riemannian geometry \citep[page 78]{lee2018introduction}.
Within a suitable neighborhood of a point, geodesics through that point become straight lines in coordinate space, and so the exponential and logarithmic maps can be computed in closed form with addition and subtraction.
However, normal coordinates are most often defined in terms of the exponential and logarithmic maps, and methods for computing them rely on the numerical approximations of these maps \citep{brun2005fast,lin2008riemannian}.
We build on an alternative characterization of normal coordinates that does not depend on either map.

We propose a novel approach for parameterizing and learning normal coordinates on a Riemannian manifold using \emph{flow charts}, which are conditional invertible neural networks that represent coordinate charts.
Our main contribution is an efficient loss function for training flow charts whose value is nonnegative and equals zero over a geodesic ball if and only if the chart is normal.
In our experiments, we first show that flow charts trained with the normal loss can recover the logarithmic map accurately at fixed base points on both synthetic manifolds and geometries derived from data.
We then show that charts trained over many base points generalize to unseen ones and achieve median logmap query speedups of $60\times$ over GP \cite{arvanitidis2019fast} and $72\times$ over GEORCE \cite{rygaard2026georce} across sixteen geometries, and exponential-map query speedups of $1.4$--$8.2\times$ on the seven high-dimensional geometries at comparable error.
Finally, we show that GEORCE can compute more accurate logarithmic maps if applied in the flow chart's learned coordinate space rather than in data coordinates.

\section{Related work}
\label{sec:related}
Our work is related to methods that learn parametric curves to approximate geodesic interpolations between two points in the setting where the derivatives of the metric are assumed to be unreliable \citep{arvanitidis2019fast}.
These curve parameterizations include splines \citep{pmlr-v151-arvanitidis22b,karczewski2026spacetime}, the posterior mean of a Gaussian process \citep{arvanitidis2019fast}, and neural networks \citep{kapusniak2024metric}.
While these methods do not learn normal coordinates directly, they offer normal coordinate estimates as their initial velocities.
This approach is also how classical data-driven methods for learning normal coordinates operate, where solutions are derived from shortest paths on a graph over data points \citep{brun2005fast,lin2008riemannian}.
These approaches are different from our method in that we learn normal coordinates directly whereas the other methods recover them as a byproduct.

More broadly, flow charts are related to methods that use neural networks to amortize the cost of expensive geometric computations.
The most relevant method is metric flow matching (MFM), which parameterizes the geodesic between two endpoints with a neural network \citep{kapusniak2024metric}.
This method learns a geodesic interpolant by minimizing the energy of a parametric curve between the two endpoints under the given metric.
Other methods parameterize the geodesic distance function instead.
\citet{kelshaw2024computing} and \citet{li2024riemannian} train a network to satisfy the eikonal equation of a given metric.
These methods are able to recover the logarithmic map using the gradient of the distance function.
Flow charts are also neural network based, but they parameterize normal coordinate charts, which are a more general structure that naturally model the exponential map, logarithmic map and Riemannian distance at once.

Like flow charts, there are other methods that represent a geometry for data using invertible neural network architectures.
The pullback metric through a normalizing flow has been used as a geometric model of data \citep{diepeveen2024pulling,diepeveen2024score} and is useful for counterfactual explanations \citep{dombrowski2021diffeomorphic} and interpolations \citep{fadel2021principled}.
However, each of these methods represents its geometry with a single chart.
Multi-chart methods instead cover the data with a finite mixture of flow-based charts \citep{kalatzis2021multichart,sidheekh2022vq,paik2022atlas,yu2026learning}, but are limited in the geometries they can represent because they use a fixed number of charts.
Our method differs from these approaches in that it is defined on a given Riemannian manifold, instead of defining its own geometry, and parameterizes a continuous family of charts that can be expanded at any point on the manifold, instead of a discrete set of points.

\section{Background}
\label{sec:background}
In this section we review basic notation and terminology that will be used in the paper.
We intentionally emphasize the coordinate dependence of geometric objects because our method will parameterize a coordinate system.
See \citet{Lee00,lee2018introduction} for a reference on differential geometry.

Let $\M$ be an $n$-dimensional smooth manifold that represents the abstract space that contains data.
A \emph{coordinate chart} is a bijective map from a neighborhood of an abstract point $\p \in \M$ to a subset of $\R^n$ on which geometric objects admit numeric representations.
We assume that the region of $\M$ that data is observed in can be represented by one coordinate system, $x \colon \M \to \M_x = \R^n$, and $\x = x(\p)$ denotes the point $\p \in \M$ in the $x$-coordinate system, so that a dataset contains the $x$-coordinates of a collection of points.
Subscripts denote the coordinate system in which a geometric object on $\M$ is represented.
For example, $V_x \in \R^n$ represents the components of a tangent vector $V \in T_{\p} \M$, and $\omega_x \in \R^n$ represents the components of a covector $\omega \in T_{\p}^* \M$.
The notation $Df(\x) = \tfrac{\partial f(\x)}{\partial x}$ denotes the Jacobian matrix of $f$ at $\x$.

A \emph{Riemannian metric} is a point-dependent inner product between two tangent vectors represented by a symmetric positive-definite matrix $g_x(\x) \in$ $\mathrm{SPD}(n)$ at each $\x \in \M_x$.
The \emph{cometric} defines an inner product between cotangent vectors and is represented by $g_x(\x)^{-1}$.
The pair $(\M, g)$ is called a Riemannian manifold, on which geometric objects like shortest paths, called \emph{geodesics}, are uniquely defined by $g$.

A curve $\x_{(\cdot)}\colon \mathbb{R} \to \M_x$ starting at an initial point $\x_0 \in \M_x$ with velocity $V_x \in \R^n$ is a geodesic if it satisfies an ODE called the geodesic equation.
The solution of this ODE is called the \emph{exponential map}.
Given the initial conditions $(\x_0, V_x)$, the exponential map returns the point obtained from simulating the geodesic equation for $1$ unit of time, $\x_1 = \exp_{\x_0}(V_x)$.
Rescaling the initial velocity with some $t \in \R$ obtains other points on the geodesic, $\x_t := \exp_{\x_0}(tV_{x})$.

Let $B = \{V_x \in \R^n \colon V_x^\top g_x(\x_0)V_x < \varepsilon^2\} \subset T_{\x_0}\M_x$ be a ball of $g$-radius $\varepsilon$ that is small enough for $\exp_{\x_0}$ to be a diffeomorphism on $B$.
Then $\X_{\x_0} := \exp_{\x_0}(B) \subset \M_x$ is a \emph{geodesic ball} on $\M_x$ centered at $\x_0$.
The \emph{logarithmic map}, $\log_{\x_0}\colon \X_{\x_0} \to B$, is the inverse of $\exp_{\x_0}$ on $\X_{\x_0}$.
If a point $\x$ is outside of the geodesic ball, there may be many geodesics connecting $\x_0$ and $\x$, in which case the logarithmic map is not unique.

\label{sec:background-riemann-normal-coordinates}
\emph{Riemann normal coordinates} (or ``normal coordinates'') are a coordinate system in which geodesics through the base point are radial lines from the origin.
Let $\X_{\x_0} \subset \M_x$ be a geodesic ball centered at $\x_0 \in \M_x$ and let $\mathbf{J}_x \in \R^{n\times n}$ be a $g$-orthogonal matrix at $\x_0$, which satisfies $\mathbf{J}_x^\top g_x(\x_0)\mathbf{J}_x = I_n$.
A \emph{normal coordinate chart} on $\X_{\x_0}$ maps $\x$ to $\s = \mathbf{J}_x^{-1}V_x \in \Ss \subset \R^n$, where $V_x = \log_{\x_0}(\x)$, so that $\x_0$ maps to $\zero$.
See \Cref{fig:normal-coordinates-flow-chart}.
Normal coordinate charts are invertible on their domains, with inverse $\s \mapsto \exp_{\x_0}(\mathbf{J}_x\s)$, and are uniquely defined up to the choice of $\mathbf{J}_x$.
Because $D\exp_{\x_0}(\zero)$ is the identity map, the Jacobian of the inverse coordinate transformation at the origin is $\mathbf{J}_x$.

\section{Method}
\label{sec:method}
We present a novel approach for learning normal coordinate charts on Riemannian manifolds.
Our method represents coordinate charts using a conditional invertible neural network and is trained with a loss that renders the flow chart approximately normal.

\subsection{Flow chart}
\label{sec:flow-chart}
We parameterize coordinate charts using a conditional invertible neural network, called a \emph{flow chart}.
\begin{definition}[Flow chart]
    \label{def:flow-chart}
    Let $\M_x \subset \R^n$ be the coordinate representation of the manifold $\M$ in the $x$ coordinate system.
    A \emph{flow chart} is a map $h^\theta_{(\cdot)}(\cdot)\colon \M_x \times \M_x \to \R^n$ with parameters $\theta$ satisfying two conditions.
    \begin{enumerate}
        \item For every base point $\x_0 \in \M_x$, the map $h^\theta_{\x_0}\colon \M_x \to \R^n$ is a diffeomorphism, with inverse denoted by $f^\theta_{\x_0} := (h^\theta_{\x_0})^{-1}$.
        \item Every base point maps to the origin of its own chart, $h^\theta_{\x}(\x) = \zero$ for all $\x \in \M_x$.
    \end{enumerate}
    We denote $\z = h^\theta_{\x_0}(\x)$ as the coordinate representation of $\x$ under the chart $h^\theta_{\x_0}$.
\end{definition}
In practice, we use normalizing flow architectures with fast inverses to parameterize flow charts \citep{rezende_variational_nodate,papamakarios2021normalizing,durkan2019neural}.

Flow charts and normalizing flows are closely related in that they both learn coordinate spaces where expensive computations on $\M_x$ are simplified.
From a geometric perspective, normalizing flows learn a coordinate chart so that the pushforward density of a complicated target probability distribution equals the density of a simple prior.
This way, sampling from the target reduces to sampling from the prior.
Similarly, given a target metric $g_x$, flow charts will be trained so that the coordinate representation, $g_z^\theta$, satisfies an identity that implies the chart is normal.
For a flow chart that is centered at $\x_0 \in \M_x$, its coordinate representation of the metric at a point $\x \in \M_x$ is given by the following tensor transformation rule \citep{lee2018introduction}:
\begin{align}
    \label{eq:metric-change-of-variables}
    g_z^\theta(\z) = Df_{\x_0}^\theta(\z)^\top g_x(\x)Df_{\x_0}^\theta(\z)
\end{align}
Note that the particular coordinate representation depends on the base point $\x_0$ and that we do not write the dependence of $g_z^\theta(\z)$ on $\x_0$ for brevity.

\subsection{Normal flow chart}
\label{sec:normal-flow-chart}
Our main observation is that normal coordinate charts can be identified as flow charts satisfying the identity in the following proposition.

\begin{restatable}[Normal chart identity]{proposition}{gausslemma}
    \label{prop:normal-chart-identity}
    Let $(\M, g)$ be a Riemannian manifold with global coordinate system $x\colon \M \to \M_x$, $\x_0 \in \M_x$ be a point, $\X_{\x_0} \subset \M_x$ be a geodesic ball centered at $\x_0$, $\mathcal{U}_x \subseteq \X_{\x_0}$ be an open set containing $\x_0$, and $h_{\x_0}\colon \mathcal{U}_x \to \R^n$ be a coordinate chart, where $\z = h_{\x_0}(\x)$ represents the coordinate of $\x \in \M_x$, $h_{\x_0}(\x_0) = \zero$, and $\Z_{\x_0}:= h_{\x_0}(\mathcal{U}_x)$ is star-shaped with respect to the origin.
    Let $g_z(\z) = Df_{\x_0}(\z)^\top g_x(\x)Df_{\x_0}(\z)$ denote the metric's $z$ components.
    We say that $h_{\x_0}$ is normal on $\mathcal{U}_x$ if it is the restriction to $\mathcal{U}_x$ of a normal coordinate chart on $\X_{\x_0}$.
    Then,
    \begin{align}
        h_{\x_0}(\cdot)\text{ is normal on $\mathcal{U}_x$ if and only if  }g_z(\z)\z = \z,\quad \forall \z\in \Z_{\x_0}
        \label{eq:normal-chart-identity}
    \end{align}
\end{restatable}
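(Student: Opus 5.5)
The plan is to prove the two directions separately. The forward direction is essentially the coordinate form of the Gauss lemma. The reverse direction is where the work lies: we show that the identity forces radial lines in $z$-coordinates to be $g$-geodesics, and then identify $h_{\x_0}$ with $\mathbf{J}_x^{-1}\log_{\x_0}$.

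\emph{Forward direction.} Suppose $h_{\x_0}$ is normal, so $f_{\x_0}(\z)=\exp_{\x_0}(\mathbf{J}_x\z)$ on $\Z_{\x_0}$. Let $Z = z^i\partial_{z^i}$ be the radial vector field. The Gauss lemma \citep{lee2018introduction} states that in normal coordinates $Z$ is $g$-orthogonal to the geodesic spheres and satisfies $g(Z,Z)=|\z|^2$. Its standard coordinate consequence is $g_{ij}(\z)z^j = z^i$, which is exactly $g_z(\z)\z=\z$. I would simply cite this, possibly recalling the one-line argument: $\partial_t g(\dot\gamma,\dot\gamma)=0$ along radial geodesics, and the first variation of $g(\dot\gamma, W)$ for angular $W$ gives orthogonality.

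\emph{Reverse direction, step 1: radial lines are geodesics.} Assume $g_z(\z)\z=\z$ on $\Z_{\x_0}$. Put $\phi(\z)=\tfrac12|\z|^2$. In components the identity says $g_{ij}z^j=\partial_i\phi$, that is, $Z^\flat=d\phi$, so $Z=\operatorname{grad}_g\phi$. It also gives $g(Z,Z)=|\z|^2=2\phi$. Since the Hessian $\nabla d\phi$ is symmetric,
\begin{align*}
\nabla_Z Z=\nabla_{\operatorname{grad}\phi}\operatorname{grad}\phi=\tfrac12\operatorname{grad}\, g(Z,Z)=\operatorname{grad}\phi=Z .
\end{align*}
For fixed $\z\in\Z_{\x_0}$, the curve $\gamma(t)=t\z$ stays in $\Z_{\x_0}$ for $t\in[0,1]$ by star-shapedness, and its velocity is $\dot\gamma(t)=Z/t$ for $t>0$. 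Then
\begin{align*}
\nabla_{\dot\gamma}\dot\gamma=\tfrac{1}{t^2}\nabla_Z Z-\tfrac{1}{t^2}Z=0 .
\end{align*}
Hence $t\mapsto f_{\x_0}(t\z)$ is a $g$-geodesic from $\x_0$ with initial velocity $\mathbf{J}\z$, where $\mathbf{J}:=Df_{\x_0}(\zero)$. Next, differentiate $g_z(t\uu)\,t\uu=t\uu$ in $t$ at $t=0$ to get $g_z(\zero)\uu=\uu$ for all $\uu$, so $g_z(\zero)=I_n$. This says $\mathbf{J}^\top g_x(\x_0)\mathbf{J}=I_n$, so $\mathbf{J}$ is $g$-orthogonal. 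Uniqueness of geodesics then gives $f_{\x_0}(t\z)=\exp_{\x_0}(t\mathbf{J}\z)$ for $t\in[0,1]$.

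\emph{Reverse direction, step 2 (main obstacle): invert to obtain $h_{\x_0}=\mathbf{J}^{-1}\log_{\x_0}$.} Inverting requires $\mathbf{J}\z\in B$, which is not automatic, since a geodesic may leave the injectivity region and re-enter $\X_{\x_0}$. I would argue by continuity. The curve $c(t)=\log_{\x_0}(f_{\x_0}(t\z))$ is continuous into the open ball $B$, because $f_{\x_0}(t\z)\in\mathcal{U}_x\subseteq\X_{\x_0}$. Consider the set $T$ of $t\in[0,1]$ with $t\mathbf{J}\z\in B$ and $c(t)=t\mathbf{J}\z$.
\begin{itemize}
\item $T$ contains $0$.
\item $T$ is open in $[0,1]$, since $\exp_{\x_0}$ is a local diffeomorphism on $B$.
\item $T$ is closed: if $t_k\to t^*$ with $t_k\in T$, then $t^*\mathbf{J}\z=\lim c(t_k)=c(t^*)\in B$.
\end{itemize}
Hence $T=[0,1]$, so $\log_{\x_0}(f_{\x_0}(\z))=\mathbf{J}\z$, i.e.\ $h_{\x_0}(\x)=\mathbf{J}^{-1}\log_{\x_0}(\x)$ on $\mathcal{U}_x$. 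This is the restriction of the normal chart on $\X_{\x_0}$ with frame $\mathbf{J}_x=\mathbf{J}$, which completes the proof.
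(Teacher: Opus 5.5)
Your proof is correct and has the same overall structure as the paper's. Both use Gauss's lemma for the forward direction. For the converse, both show $g_z(\zero)=I_n$, then that radial lines are geodesics, then a continuity/injectivity argument placing $\mathbf{J}\z\in B$. Your open--closed set $T$ is the paper's first-exit contradiction at $t^*=\varepsilon/\|\z\|_2$, rephrased as a connectedness argument.

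The real difference is the central step. The paper uses its contracted-Christoffel lemma to write, for an arbitrary chart, $\Gamma_z(\z)(\z,\z)=g_z(\z)^{-1}\bigl[D\delta(\z)\z-\tfrac12 D\delta(\z)^\top\z-\tfrac12\delta(\z)\bigr]$ with $\delta(\z)=g_z(\z)\z-\z$, and then sets $\delta\equiv\zero$. You instead read the identity as $Z=\operatorname{grad}_g\phi$ with $g(Z,Z)=2\phi$. Symmetry of the Hessian then gives $\nabla_Z Z=Z$, which in coordinates is exactly $\Gamma_z(\z)(\z,\z)=\zero$.

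Your route is shorter and coordinate-free, and it makes the geometry transparent. The identity says that $r=\|\z\|_2$ satisfies the eikonal equation $\|\operatorname{grad}_g r\|_g=1$, which ties it to the distance-function methods in the related work. The paper's formula holds without assuming the identity. It therefore also expresses the radial geodesic acceleration in terms of the residual that the normal loss penalizes and that residual's Jacobian, i.e.\ how an imperfect chart bends radial curves away from geodesics.

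One small remark on your inversion step: openness of $T$ follows most directly from openness of $B$ and injectivity of $\exp_{\x_0}$ on all of $B$. Together these give $c(t)=t\mathbf{J}\z$ whenever $t\mathbf{J}\z\in B$, so only membership in $B$ has to be propagated, as in the paper.
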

This identity is Gauss's lemma written in normal coordinates and serves as a definition of normal coordinates in physics \citep{muller1999closed}, but, to our knowledge, has not been used in a machine learning context.
See \Cref{app:normal-chart-identity-proof} for a proof.

The advantage of this characterization of normality is that the identity can be checked with any flow chart and only requires matrix-vector products with the metric and Jacobian-vector products with the flow chart.
Additionally, as illustrated by \Cref{fig:normal-coordinates-flow-chart}, when a flow chart satisfies the normal chart identity, the exponential and logarithmic maps can be recovered in closed form on the domain where the identity holds:
\begin{align}
    \exp_{\x_0}(V_x) = f_{\x_0}^\theta(Dh_{\x_0}^\theta(\x_0)V_x), \qquad \log_{\x_0}(\x) = Df_{\x_0}^\theta(\zero)h_{\x_0}^\theta(\x)
\end{align}

We train a flow chart to satisfy \Cref{eq:normal-chart-identity} at a given $\x_0$ by optimizing its parameters to minimize the residual of the identity.

\begin{restatable}[Normal loss]{proposition}{normalityerror}
    \label{proposition:normal-loss}
    Let $(\M,g)$ be a Riemannian manifold, $h_{\x_0}^\theta$ be a flow chart with parameters $\theta$, $\mathcal{U}_x \subset$ $\X_{\x_0}$ be an open set containing $\x_0$, and $\rho_x(\cdot|\x_0)$ be a probability density restricted to $\mathcal{U}_x$, and assume $\rho_x(\x|\x_0) > 0$ over the set.
    Additionally, let $\mathcal{U}_z = h_{\x_0}(\mathcal{U}_x)$ and let $\rho_z(\cdot|\x_0)$ be the density of the pushforward distribution through $h_{\x_0}$, and assume $\mathcal{U}_z$ is star-shaped about the origin.
    The \emph{normal loss} over $\mathcal{U}_x$ for $h_{\x_0}^\theta$ is,
    \begin{align}
        \mathcal{L}_{\text{normal}}(\theta) = \mathbb{E}_{\z \sim \rho_z}\left[\e^\top g_z^\theta(\z)\e + \e^\top g_z^\theta(\z)^{-1}\e - 2\right]
        \label{eq:normal-loss}
    \end{align}
    where $\e = \z/\|\z\|_2$.
    The integrand is defined to be zero at $\z = \zero$.
    The normal loss satisfies $\mathcal{L}_{\text{normal}} \geq 0$ and $\mathcal{L}_{\text{normal}} = 0$ if and only if $h_{\x_0}^\theta$ is normal on $\mathcal{U}_x$.
\end{restatable}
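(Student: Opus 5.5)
The plan is to reduce the statement to \Cref{prop:normal-chart-identity} through a pointwise inequality for symmetric positive-definite matrices. Fix $\z \neq \zero$, write $A = g_z^\theta(\z) \in \mathrm{SPD}(n)$, and let $\e = \z/\|\z\|_2$ be the unit vector. The integrand is $\phi(\e) = \e^\top A\e + \e^\top A^{-1}\e - 2$.

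The first step is to show $\phi(\e) \geq 0$, with equality if and only if $A\e = \e$. By Cauchy--Schwarz in the $A$-inner product,
\[
1 = (\e^\top \e)^2 = \bigl((A^{1/2}\e)^\top (A^{-1/2}\e)\bigr)^2 \leq (\e^\top A \e)(\e^\top A^{-1}\e).
\]
Set $a = \e^\top A\e > 0$ and $b = \e^\top A^{-1}\e > 0$. Then $ab \geq 1$, so AM--GM gives $a + b \geq 2\sqrt{ab} \geq 2$. Equality forces two things:
\begin{itemize}
\item $a = b = 1$;
\item $A^{1/2}\e$ and $A^{-1/2}\e$ are parallel, which means $\e$ is an eigenvector of $A$ with eigenvalue $\lambda$.
\end{itemize}
For an eigenvector, $a = \lambda$ and $b = 1/\lambda$, so equality means $\lambda = 1$, i.e.\ $A\e = \e$. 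Conversely, $A\e = \e$ gives $A^{-1}\e = \e$, so $\phi(\e) = 0$. Scaling by $\|\z\|_2$, $A\e = \e$ is equivalent to $g_z^\theta(\z)\z = \z$. At $\z = \zero$ the integrand is zero by convention, and the identity holds trivially there. Hence the integrand is nonnegative, and $\mathcal{L}_{\text{normal}} \geq 0$.

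For the equality case, suppose first that the chart is normal on $\mathcal{U}_x$. \Cref{prop:normal-chart-identity} applies with $\Z_{\x_0} = \mathcal{U}_z$, which is star-shaped by hypothesis. It gives $g_z\z = \z$ on $\mathcal{U}_z$, so the integrand vanishes identically and the loss is zero. Conversely, suppose $\mathcal{L}_{\text{normal}} = 0$. Then the nonnegative integrand vanishes $\rho_z$-almost everywhere.

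The main obstacle is upgrading ``almost everywhere'' to ``everywhere on $\mathcal{U}_z$'' so that \Cref{prop:normal-chart-identity} can be invoked. I plan to use two facts.
\begin{itemize}
\item \emph{The density is positive.} Since $h_{\x_0}$ is a diffeomorphism, $\rho_z(\z) = \rho_x(f_{\x_0}(\z)|\x_0)\,|\det Df_{\x_0}(\z)| > 0$ on the open set $\mathcal{U}_z$. So $\rho_z$-null sets coincide with Lebesgue-null subsets of $\mathcal{U}_z$, and the integrand vanishes Lebesgue-a.e.\ on $\mathcal{U}_z$.
\item \emph{The integrand is continuous on $\mathcal{U}_z \setminus \{\zero\}$.} This holds because $g_x$, $Df_{\x_0}$, matrix inversion and $\z \mapsto \z/\|\z\|_2$ are all continuous there.
\end{itemize}
A nonnegative continuous function that vanishes a.e.\ on an open set vanishes everywhere on it: otherwise it would be positive on a ball of positive measure. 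Hence $g_z\z = \z$ for all $\z \in \mathcal{U}_z \setminus\{\zero\}$, and the identity holds trivially at $\zero$. \Cref{prop:normal-chart-identity} then yields that $h_{\x_0}^\theta$ is normal on $\mathcal{U}_x$.

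Finally, I will note that the expectation is well defined as the integral of a nonnegative measurable function, possibly $+\infty$, so no integrability assumption is needed for these conclusions.
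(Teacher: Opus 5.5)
Your proof is correct, and its second half matches the paper's argument exactly: positivity of $\rho_z$ via the change of variables, continuity of the integrand on $\mathcal{U}_z\setminus\{\zero\}$, and then \Cref{prop:normal-chart-identity}.

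The only real difference is the pointwise step. The paper sets $\delta(\z) = g_z(\z)\z - \z$ and expands
\[
\delta(\z)^\top g_z(\z)^{-1}\delta(\z) = \z^\top g_z(\z)\z - 2\z^\top\z + \z^\top g_z(\z)^{-1}\z .
\]
So the integrand is exactly $\delta(\z)^\top g_z(\z)^{-1}\delta(\z)/\|\z\|_2^2$. Positive definiteness of $g_z(\z)^{-1}$ then gives both nonnegativity and ``zero if and only if $\delta(\z)=\zero$'' in one line, with no equality-case analysis.

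Your Cauchy--Schwarz plus AM--GM route reaches the same characterization. However, it has to track the equality conditions of two inequalities and go through the eigenvector argument. The paper's identity also gives an interpretation: the loss is the rescaled squared cometric norm of the residual of the normal chart identity. It therefore measures how badly the identity fails, not merely whether it holds.

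Your closing remark that the expectation is well defined in $[0,\infty]$ without an integrability assumption is a small point the paper leaves implicit.
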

See \Cref{app:normal-loss-proof} for a proof.
The integrand is the squared $g$-norm of the difference between the two sides of the normal chart identity, divided by the squared norm of $\z$ to remove the quadratic dependence on the distance from the origin.
We discuss the importance of this scaling in \Cref{sec:practical-considerations}.
Note that one can train with multiple $\x_0$ if amortization is desired because flow charts are conditional neural networks.
The main challenges in training with the normal loss stem from selecting the neighborhood $\mathcal U_x$ and distribution $\rho_x(\cdot\mid\x_0)$.
This selection is not trivial because \Cref{proposition:normal-loss} places geometric conditions on $\mathcal{U}_x$,
and the data might not come with a natural coupling to inform how to sample $\x \sim \rho_x(\cdot|\x_0)$.

\subsection{Practical considerations}
\label{sec:practical-considerations}
Suppose the user has access to a dataset of points $\mathcal{D} = \{\x^{(i)}\}_{i=1:N}$ and selects a subset $\mathcal{D}_0 \subseteq \mathcal{D}$ containing base points $\x_0 \in \mathcal{D}_0$.
The main practical challenge in training with \Cref{proposition:normal-loss} is determining how to sample $\z$ given $\x_0$, or equivalently by a change of variables $\x$ given $\x_0$, to estimate the normal loss.
To satisfy the assumptions of \Cref{proposition:normal-loss}, this sampler should be restricted to a neighborhood of $\x_0$ on which there exists a normal solution, and have positive density in this region.
Let us first consider how to satisfy the latter, as it can be satisfied independently of the former.
Given a procedure for sampling pairs of points $(\x_0,\x)$, we propose to draw coordinates to estimate the normal loss by sampling on the chord between $\zero = h_{\x_0}^\theta(\x_0)$ and $\z = h_{\x_0}^\theta(\x)$, and adding a small amount of Gaussian noise.
This amounts to selecting $\rho_z$ to be the distribution of $N(\cdot \mid t\,h^\theta_{\x_0}(\x), \gamma^2 I_n)$, where $t \sim \mathrm{U}(0,1)$.
In our experiments, we select $\gamma$ to be small relative to the chord length (see \Cref{sec:amortized-training}).
This sampler provides coverage along the endpoint chords and around the origin.

\begin{figure*}[t]
    \centering
    \includegraphics[width=\textwidth]{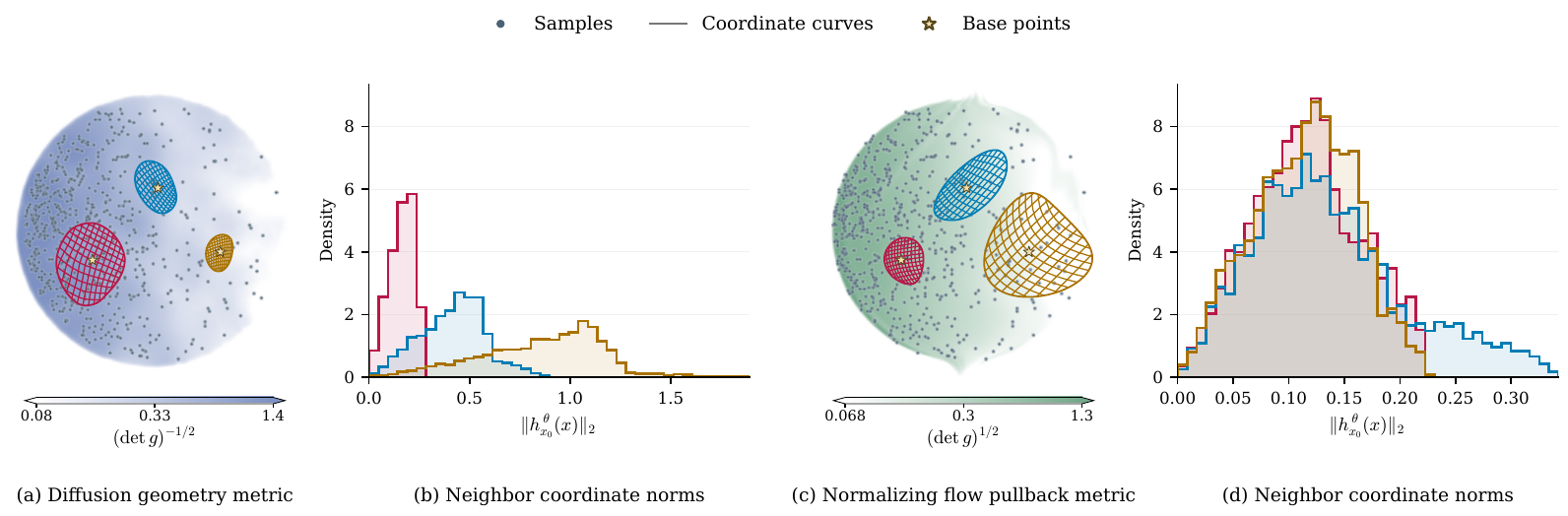}
    \caption{
    The scale of coordinates seen by a flow chart during amortized training depends jointly on how endpoint pairs are selected and the underlying geometry.
    (a) shows a diffusion geometry and (c) the pullback metric through a normalizing flow with a uniform prior, overlaid with the coordinate curves of learned flow charts contained in fixed radius coordinate balls.
    (b) and (d) show the distributions of coordinate norms for the nearest neighbors of each base point.
    The coordinates of endpoints selected with nearest neighbors have a scale that varies substantially across base points in (b), but is more comparable in (d) because in (a), coordinate balls tend to expand in the data space where observations are concentrated.
    }
    \label{fig:four-diffusion-charts}
\end{figure*}

In theory, every $\x$ should be restricted to a neighborhood of $\x_0$ that satisfies the assumptions of \Cref{proposition:normal-loss} so that the minimizer of the normal loss is normal.
On simple manifolds, like the sphere or torus, this neighborhood is known a priori, but in general it is unknown and determined by whether or not the exponential is a diffeomorphism on the region.
In this general setting, we propose to draw $\x$ from $\mathcal{D}$ using a heuristic that depends on $\x_0$, like Euclidean $k$ nearest neighbors.
This choice is motivated by our overarching goal of making flow charts a useful model for estimating solutions to geodesic queries by generalization from the pairs seen at training time.
However, decoupling the geometry from the training pairs can cause numerical issues that stem from the relative scale of coordinates seen by the model.
\Cref{fig:four-diffusion-charts} illustrates this.

The figure shows two geometries built from data with coordinate curves of learned flow charts, contained within a fixed radius coordinate ball that approximate geodesic balls.
Panel (a) shows a diffusion geometry \cite{jones2026computing}, whose metric is designed so that paths through regions supported by data are shorter than paths away from data \citep{hauberg2012geometric,arvanitidis2016locally,kapusniak2024metric,gaga}.
Panel (c) shows the pullback metric through a normalizing flow with a uniform prior that is fit to the data \citep{dombrowski2021diffeomorphic,diepeveen2024pulling}, whose Riemannian volume density approximates the probability density of data.
The colored background in (c) shows the intensity of the volume density in green, and the inverse volume density in blue for (a).
We can see that the size of the coordinate balls and the Riemannian volume density on each geometry relate to the density of data in opposite ways.
This behavior has practical implications for the scale of $\z$ that the model sees during training for a fixed endpoint sampling strategy.
For example, to generate this figure we sample from the Euclidean $k$ nearest neighbors to select $\x$ given $\x_0$.
(b) and (d) show the distribution of the norms of the neighbor coordinates for each base point.
There is more variation across the relative scales in (b) than (d), meaning that the model and optimization procedure must be robust to this variation.
This was our motivation for dividing the normal loss integrand by the squared norm of $\z$, which removes its explicit quadratic dependence on coordinate scale.
In early experimentation, we found that it was difficult to scale our approach to geometries over images and suspect that the variation in coordinate scales contributed to these difficulties.
We leave the investigation of other endpoint samplers for future work.

\section{Experiments}
\label{sec:experiments}
Our experiments test the accuracy of charts trained at fixed base points, their performance after pretraining over base points, and their use as coordinates for a numerical solver.
The experimental details are given in \Cref{app:protocol} and the full results for all tested settings are given in \Cref{app:complete-results}.

\paragraph{Geometries}
We evaluate on six families of manifolds with closed-form exponential and logarithmic maps: spheres, flat tori, hyperbolic spaces, manifolds of symmetric positive-definite matrices, Grassmann manifolds, and special orthogonal groups.
Each manifold is used in a low dimensional version, with $n\in\{2,3,4\}$, and a high dimensional version, with $n\in\{28,30\}$, and is observed through a nonlinear change of coordinates so that the observed coordinates are not already normal.
We also use three two-dimensional point clouds, a disk, two moons and nested circles, with the RBF metric of \citet{kapusniak2024metric}, and a diffusion geometry built from alanine dipeptide data ($n=30$), learned with Riemannian metric matching \citep{bamberger2026metricmatching}.
For these four data geometries, the reference logmap is computed with a GP solver \citep{arvanitidis2019fast} in double precision.
Each geometry has at most $128$ test base points with $1{,}024$ queries per base point on the closed form manifolds and $512$ on the data geometries, before discarding queries without a valid reference solution.
For each base point $\x_0$, let $\mathcal J(\x_0)$ denote the indices of the remaining queries and let $\mathcal I(\x_0)\subseteq\mathcal J(\x_0)$ denote the subset for which a method successfully produces a solution.

\paragraph{Flow chart architecture}
The flow chart we use in these experiments is a conditional neural spline flow \citep{durkan2019neural} with $L$ rational quadratic spline coupling layers, each with $8$ bins on $[-10, 10]$ in both low and high dimensions.
Before each coupling layer, we place a parametric elementwise scale and shift that is conditioned on the base point encoder and a dense invertible linear map (with an LU parameterization).
Each coupling layer parameterizes its spline using a FiLM conditioned \citep{perez2018film} residual network with three residual blocks.
The conditioning is computed with a separate residual network that encodes the base point into a $64$-dimensional feature vector for all coupling layers.
The network widths and other architectural choices are given in \Cref{app:models}.

We optimize the normal loss with the Adam optimizer \citep{kingma2014adam} for $2{,}000{,}000$ steps with a learning rate of $3 \times 10^{-4}$ with a linear warmup over the first $100$ steps and linear decay to $0$ for the last half of training, and a batch size of $512$.
When training over many base points, we found that it was beneficial to include multiple endpoints per base point and used $16$ base points with $32$ endpoints each.
We clip gradients to unit norm and maintain an exponential moving average of the parameters with decay $0.999$.
We evaluated on held-out validation pairs every $500$ steps and retain the checkpoint with the lowest validation loss.
The smaller four-layer charts used for the MFM comparison on data geometries used a constant learning rate after warmup and stopped early after $50{,}000$ steps without validation improvement.

\subsection{Evaluation}

\paragraph{Logarithmic map}
\label{sec:logmap}
The baselines we consider are all based on geodesic interpolation.
The first is metric flow matching (MFM) \citep{kapusniak2024metric}, which uses a neural network to parameterize a curve that is guaranteed to start at $\x_0$ and end at $\x$.
This network uses the same gated residual architecture as the flow chart's base point encoder.
Since flow charts use an expensive, invertible neural network architecture compared to MFM's unrestricted architecture, we vary the number of blocks and hidden sizes of the MFM to construct two baselines.
The first has a similar parameter count of a four-layer flow chart (``parameter matched''), and the second has a comparable compute time (``time matched'').
The MFM baselines were trained by minimizing the energy for $2{,}000{,}000$ steps, using the same optimization procedure as the flow charts.
The other two are the Gaussian process solver (GP) of \citet{arvanitidis2019fast} and GEORCE \citep{rygaard2026georce}, which solve for the geodesic between the endpoints by refining a curve parameterized by $K\in\{4,8,16,32\}$ intermediate accelerations and points respectively.
MFM serves as a fast, approximate baseline that can benefit from pretraining, while GP and GEORCE are numerical references that must be re-solved for each new pair of endpoints.

We train flow charts using the normal loss, and MFM by minimizing the energy of its curves.
In the non-amortized setting, we train a separate chart with $L=32$ coupling layers at each of four test base points, using training queries separate from its test queries.
For the runtime experiments, we separately vary chart capacity over $L\in\{4,8,16,32\}$ using the wider coupling networks described in \Cref{app:models}.

\begin{figure}[t]
\centering
\includegraphics[width=\columnwidth]{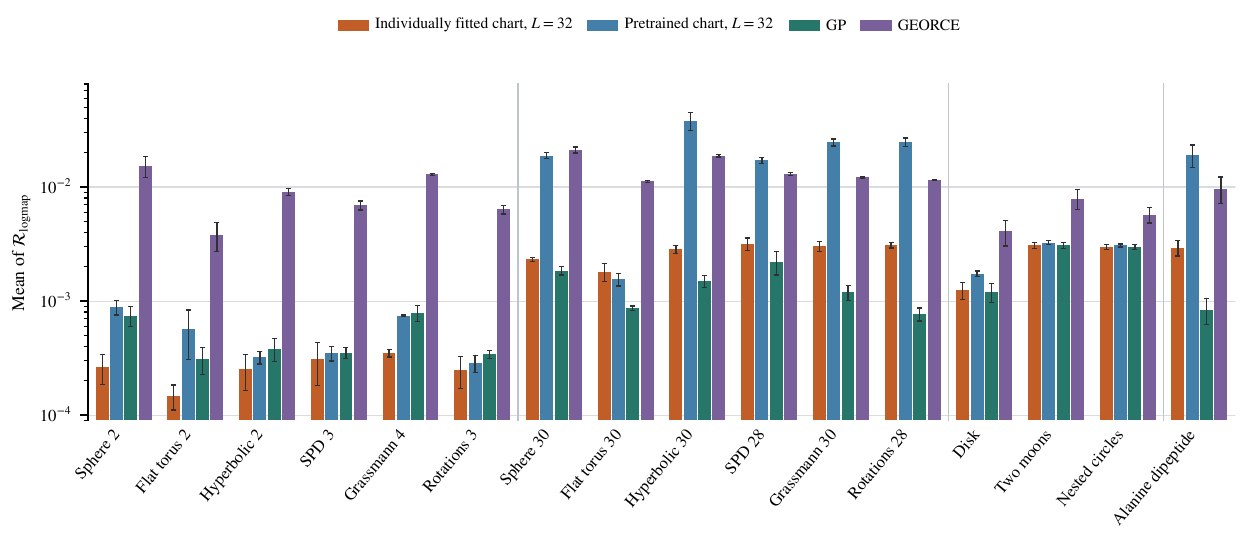}
\caption{Mean logmap normalized RMSE at four fixed base points with error bars indicating one standard error.
All methods were evaluated on the same four randomly selected test base points, and use their most favorable settings.
The charts use $L=32$ coupling layers, the GP uses $K=16$ intermediate accelerations, and GEORCE uses $K=32$ intermediate knot points.
All methods achieve low error, and the individually fitted flow chart achieves error comparable to the GP and GEORCE baselines, with mean error below $0.0032$ on every geometry.
The pretrained chart has higher error than the individually fitted chart on six of the seven high-dimensional geometries.
}
\label{fig:accuracy}
\end{figure}

\begin{figure}[t]
\centering
\includegraphics[width=\columnwidth]{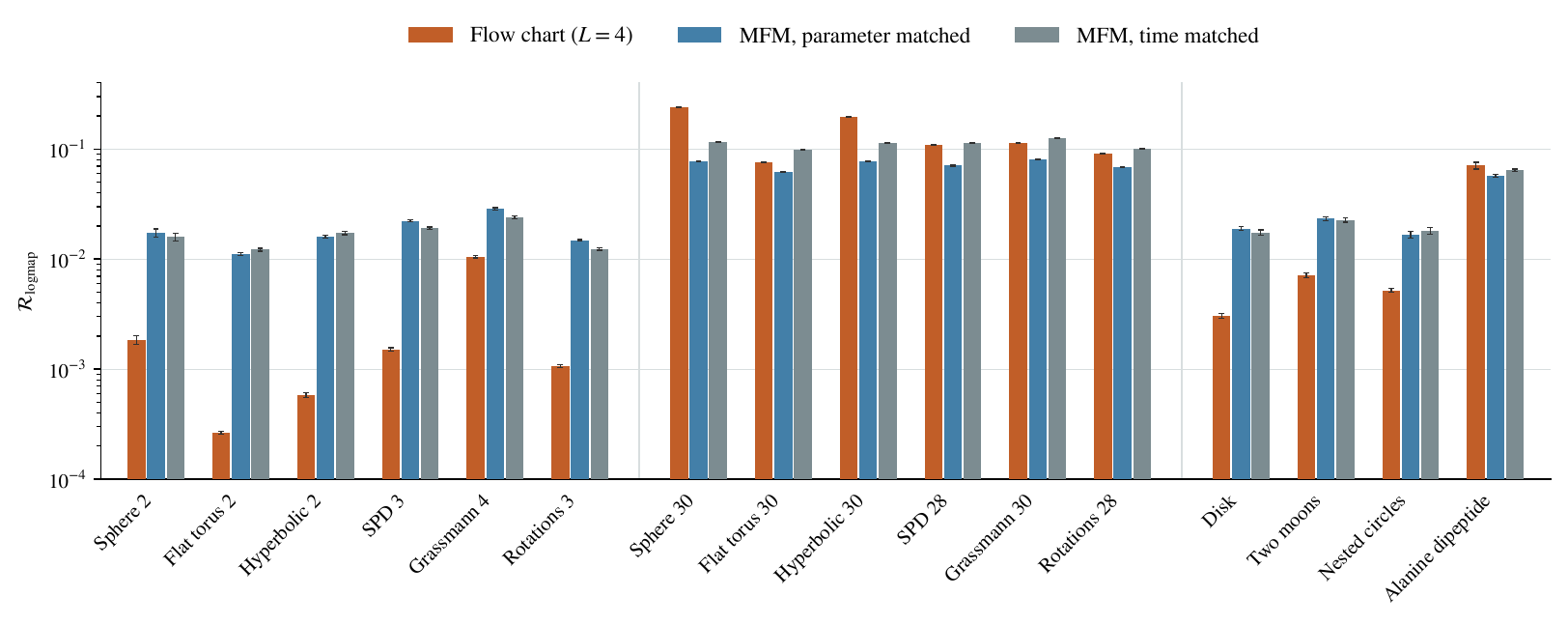}
\caption{
Amortized logmap accuracy relative to MFM.
Each bar denotes the mean $\Rlog$ of the $L=4$ flow chart or a parameter matched or time matched MFM at the same held out base points.
Error bars indicate one standard error.
In most settings the flow chart outperforms the baselines, but the parameter matched method performs best in the high dimensional synthetic geometries.
}
\label{fig:accuracy-amortized}
\end{figure}

Let $U_x^{(i)}$ be a method's logmap solution and $V_x^{(i)}$ the reference solution for query $i$ at $\x_0$.
We measure accuracy using the root mean squared error over the successful queries, $\Elog(\x_0)$, and normalize it by the RMS norm of the reference logmaps to account for differences in scale across geometries to obtain the normalized RMSE, $\Rlog(\x_0)$.
Dropping the dependence on $\x_0$ for brevity, these quantities are defined as:
\[
\Elog
=\sqrt{\frac{1}{|\mathcal I|}\sum_{i\in\mathcal I}
\|U_x^{(i)}-V_x^{(i)}\|_g^2},\;
r_{\mathrm{ref}}
=\sqrt{\frac{1}{|\mathcal J|}\sum_{i\in\mathcal J}
\|V_x^{(i)}\|_g^2},\;
\Rlog=\frac{\Elog}{r_{\mathrm{ref}}}
\]
For each geometry, we report the mean of $\Rlog(\x_0)$ across base points and its standard error.
We also report the wall clock time, $\tau$, which is measured in microseconds per query on one NVIDIA A16 in single precision, with compilation and preprocessing excluded.

\Needspace{4\baselineskip}
\paragraph{Exponential map}
\label{sec:expmap}
We use the same geometries to evaluate the exponential map.
Let $(\x_0,V_x,\x_1)$ be a test triplet, where $V_x=\log_{\x_0}(\x_1)$ is known in closed form or estimated by the reference GP solver.
Given $\x_0$ and $V_x$, each method is tasked with recovering the endpoint $\x_1$.
The baselines simulate the geodesic with initial position $\x_0$ and velocity $V_x$ by integrating Hamilton's equations with Euler, Heun, Tsit5 \citep{tsitouras2011runge} and Dopri8 \citep{prince1981high} with ranges of steps selected to cap the total number of cometric evaluations at $129$.
We use the implementation of these solvers provided by the Diffrax library  \citep{kidger2021on}.

Let $\gamma_1^{(i)}$ be a method's endpoint solution for query $i$ at $\x_0$.
We measure accuracy using the root mean squared error over the successful queries, $\Eexp(\x_0)$, and normalize it by the RMS Euclidean distance from the base point to the reference endpoints to obtain the normalized RMSE, $\Rexp(\x_0)$:
\[
\Eexp
=\sqrt{\frac{1}{|\mathcal I|}\sum_{i\in\mathcal I}
\|\gamma_1^{(i)}-\x_1^{(i)}\|_2^2},\;
r_{\mathrm{chord}}
=\sqrt{\frac{1}{|\mathcal J|}\sum_{i\in\mathcal J}
\|\x_1^{(i)}-\x_0\|_2^2},\;
\Rexp
=\frac{\Eexp}{r_{\mathrm{chord}}}
\]
For this experiment, a query is successful if the method returns a finite endpoint.
The reference scale uses all queries in $\mathcal J(\x_0)$ and is shared across methods.
For each geometry, we report the mean of $\Rexp(\x_0)$ across base points and its standard error, both multiplied by $100$, and the mean wall clock time per query, $\tau$.

\paragraph{Runtime comparisons}
We also compare the runtimes of the methods needed to achieve a common accuracy.
For the logarithmic map, we compare pretrained flow charts with GP and GEORCE, and for the exponential map we compare them with the ODE solvers.
All methods are timed on the same NVIDIA A16 gpu in the same way, and we sweep over various batch sizes to solve multiple GP and GEORCE queries at once, and report the time for the fastest batch size.
For each geometry and comparison, we find the lowest error threshold that both methods can reach among their tested settings, then compare their fastest settings that meet this threshold.
See \Cref{app:matching} for details.

\begin{figure*}[!t]
\centering
\includegraphics[width=0.98\textwidth,trim=0 7bp 0 5bp,clip]{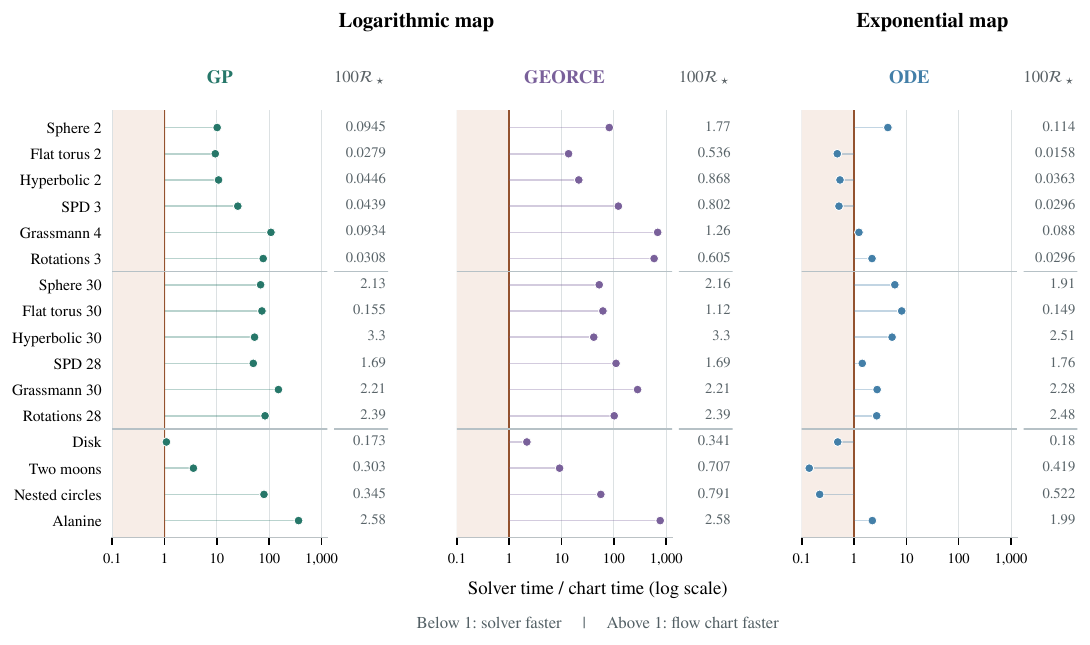}
\caption{Comparison of time needed to achieve similar error between flow charts and numerical methods.
For each comparison, $\mathcal{R}_\star$ is the lowest error level that both methods can reach among the settings we tested.
Each dot is the numerical method's solve time divided by the chart's solve time, using each method's fastest setting with error at most $\mathcal{R}_\star$.
Values above one favor the chart.
Flow charts are faster than the numerical baselines, except for exponential-map computation on the low-dimensional data geometries, whose metrics are fast to evaluate.
}
\label{fig:speed}
\end{figure*}

\subsection{Results}
\Cref{fig:accuracy} shows that a flow chart trained at a fixed base point can recover the logarithmic map to a degree comparable to the baselines.
Over $4$ base points and $1024$ and $512$ test queries for closed form and data-driven geometries, respectively, the flow chart achieved an average error below $0.0032$, and the pretrained flow chart outperformed GEORCE on $11$ of the $16$ geometries.
The chart is more accurate than GEORCE on all sixteen geometries, but slightly less accurate than GP on the high dimensional manifolds.
These results empirically validate the correctness of training flow charts for the normal loss.

\Cref{fig:accuracy-amortized} compares the performance of a flow chart with $L=4$ against the MFM baselines.
The flow chart is more accurate than both MFMs on all nine geometries with $n\leq4$.
On the seven geometries with $n\geq28$, its error is within a factor of $2.1$ of time-matched MFM, and is lower on four of them.
This could be because the flow chart has a restrictive neural network architecture, so it needs more parameters than MFM's unconstrained architecture to achieve the same accuracy.

For the logarithmic map, \Cref{fig:speed} shows that the amortized flow chart is up to $364\times$ faster than GP and $766\times$ faster than GEORCE, with median speedups of $60\times$ and $72\times$ across the sixteen geometries.
On the high dimensional closed form manifolds, the speedups are $50$--$150\times$ against GP and $41$--$283\times$ against GEORCE.
On alanine dipeptide, both comparisons use a threshold of $2.58$ on the mean of $100\times\Rlog$, which the chart reaches $364\times$ faster than GP and $766\times$ faster than GEORCE.

For the exponential-map comparison in \Cref{fig:speed}, we vary the chart size and the number of steps used by each ODE solver.
For each geometry, we select the fastest chart and the fastest setting across all four ODE solvers whose mean $\Rexp$ is at most $\mathcal{R}_\star$.
Each point shows the ODE runtime divided by the chart runtime, so values above one favor the chart.
For the exponential map on the seven geometries with $n\geq28$, \Cref{fig:speed} shows that flow charts are $1.4$--$8.2\times$ faster than the ODE solvers to achieve comparable error, and on the two dimensional point clouds, the ODE solvers are faster than the charts.
Here, the fastest solver takes $0.14$--$0.49$ times the runtime of the fastest chart.
\Cref{app:expmap-results} reports the exponential-map errors and the comparison at a fixed runtime budget.

\subsection{Flow chart preconditioner}
\label{sec:preconditioner}
Finally, we show that pretrained flow charts can be used to refine the logmap solution of GEORCE.
Flow charts learn a coordinate space where geodesics are approximated by radial lines through the origin.
We hypothesize that this behavior can make the task of solving for geodesics simpler to boost the performance of numerical methods.
We test this by comparing the performance of GEORCE in the data space and in the coordinate space.
Let $\x_0$ be a test base point and $\x$ one of its queries.
We encode the query with the pretrained flow chart, and use GEORCE to solve for the geodesic from $\zero$ to $h_{\x_0}^{\theta}(\x)$ under the coordinate representation of the metric, $g_z^\theta$.
We use the same test base points and queries, knot counts, stopping tolerance, iteration cap and precision as for GEORCE in the data coordinates.
The pretrained flow chart has $L=4$ coupling layers and we run GEORCE at knot counts $K\in\{4,8,16,32\}$.
\Cref{tab:preconditioner} shows that solving GEORCE in the learned coordinate space consistently improves its performance.

\begin{table}[H]
\centering
\caption{
GEORCE in data coordinates and in the coordinates of a pretrained flow chart.
Columns vary GEORCE's knot count $K$.
Each cell reports the mean of $100\times \Rlog$ across base points in data coordinates followed by an arrow and the value in chart coordinates.
The bold numbers above the arrows show the ratio between the two numbers.
Solving in chart coordinates consistently lowers error.
}
\label{tab:preconditioner}
\includegraphics[width=\textwidth]{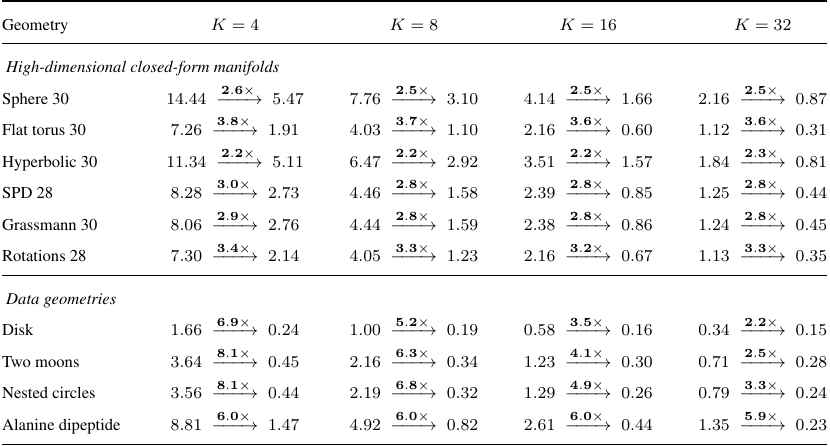}
\end{table}

\section{Conclusion}
\label{sec:conclusion}
We introduced flow charts, which are conditional invertible neural networks that learn normal coordinates on a given Riemannian manifold.
Our training objective is non-negative and vanishes precisely at normal coordinate charts, under suitable domain assumptions, enabling direct approximations of both the exponential and logarithmic maps.
Across synthetic and data-derived geometries, flow charts approximate the maps on par with numerical methods, while being substantially faster when pretrained on data, and they are comparable in performance to MFM for computing the logarithmic map.
The main limitation of our method is that it requires heuristics to select training points and that our method can only answer local geodesic queries.
Future work could address these limitations by investigating different training strategies, using flow charts to define their own geometry, and by re-centering charts along longer geodesics to answer queries beyond a single chart's domain.

\subsection*{AI use statement}

The research ideas and the design of the method and loss functions originated with the authors.
We used ChatGPT and Claude to assist with designing and carrying out experiments, writing and debugging code, running jobs, analyzing results, and producing figures.
These tools also helped optimize computational performance, including the efficient computation of geodesic accelerations described in the appendix.
We used them to suggest draft proofs, check derivations, and proofread the manuscript.
We personally checked the experimental setup, results, and proofs, and take responsibility for the final content of this work, including all AI-assisted text, claims, code, and figures.

\bibliographystyle{plainnat}
\bibliography{references}

\clearpage
\appendix
\begin{center}
{\Large\bfseries Supplementary Materials}
\end{center}
\medskip

\section{Proofs}
\label{app:proofs}

\begin{lemma}[Contracted Christoffel symbols]
    \label{lem:contracted-christoffel}
    Let $(\M, g)$ be a $n$-dimensional Riemannian manifold, $x\colon \M \to \M_x$ be a coordinate system, $\x \in \M_x$ be a point, $V = V_x^i \frac{\partial}{\partial x^i} \in T_\x \M_x$ be a tangent vector and let $\Gamma_x(\x)^k_{ij}$ denote the Christoffel symbols of its Levi-Civita connection at $\x \in \M_x$.
    Let $\Gamma_x(\x)(V_x, V_x) \in \R^n$ denote the contraction $\Gamma_x(\x)(V_x, V_x)^k = \Gamma_x(\x)^k_{ij}\, V^i V^j$.
    Then
    \begin{align}
        \Gamma_x(\x)(V_x, V_x) = g_x(\x)^{-1}\left[D_V (g_x(\x)V_x) - \frac{1}{2}\nabla_x \left(V_x^\top g_x(\x)V_x
        \right)\right]
        \label{eq:contracted-christoffel}
    \end{align}
    where $V$ is constant with respect to the directional derivative $D_V$ and the Euclidean gradient $\nabla_x$.
\end{lemma}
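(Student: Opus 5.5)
The plan is to start from the standard coordinate formula for the Levi-Civita Christoffel symbols,
\begin{align*}
\Gamma_x(\x)^k_{ij} = \tfrac{1}{2} (g_x^{-1})^{kl}\left(\partial_i g_{lj} + \partial_j g_{li} - \partial_l g_{ij}\right),
\end{align*}
where $\partial_i = \partial/\partial x^i$ and the metric components are evaluated at $\x$. We contract with $V^i V^j$ and treat $V_x$ as a constant vector. Since the factor $g_x(\x)^{-1}$ already appears on both sides of \Cref{eq:contracted-christoffel}, it suffices to show that the $l$-th component of the bracket equals $\tfrac12 V^iV^j(\partial_i g_{lj} + \partial_j g_{li} - \partial_l g_{ij})$.

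First, handle the two symmetric terms. Relabeling the dummy indices $i \leftrightarrow j$ shows $V^iV^j\partial_j g_{li} = V^iV^j \partial_i g_{lj}$. Hence their sum is $V^i V^j \partial_i g_{lj}$ and cancels the factor $\tfrac12$. Because $V$ is constant, this equals $V^i\partial_i\,(g_{lj}V^j)$, which is the $l$-th component of the directional derivative $D_V(g_x(\x)V_x)$ along $V$.

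Next, handle the third term. Again using constancy of $V$, $V^iV^j\partial_l g_{ij} = \partial_l\,(V^iV^j g_{ij}) = \partial_l (V_x^\top g_x(\x)V_x)$. This is the $l$-th component of $\nabla_x(V_x^\top g_x(\x) V_x)$, and with the prefactor it gives $-\tfrac12\nabla_x(\cdot)$. Applying $g_x(\x)^{-1}$ then recovers \Cref{eq:contracted-christoffel}, using the symmetry of $g$ to identify $(g^{-1})^{kl}$ applied to a covector with matrix-vector multiplication.

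There is no real obstacle here. The only care needed is in the index relabeling, which relies on the symmetry of $V^iV^j$ together with $g_{li} = g_{il}$, and in stating precisely that "constant $V$" means the derivatives act only on $g_x$. This is what justifies pulling $V$ inside the derivatives.
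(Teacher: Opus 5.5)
Your proposal is correct and follows essentially the same route as the paper: start from the coordinate formula for the Levi-Civita Christoffel symbols, contract with $V^iV^j$ so the two symmetric terms merge, and identify the results with $D_V(g_x(\x)V_x)$ and $\tfrac12\nabla_x(V_x^\top g_x(\x)V_x)$ by moving the constant $V$ inside the derivatives. Your index bookkeeping for both identifications is slightly more explicit than the paper's, but the argument is the same.
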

\begin{proof}
    The Christoffel symbols of the Levi-Civita connection are given by
    \begin{align}
        \Gamma_x(\x)^k_{ij}
        =
        \frac{1}{2} g_x(\x)^{km}
        \left(
        \frac{\partial g_{x,jm}(\x)}{\partial x^i}
        +
        \frac{\partial g_{x,im}(\x)}{\partial x^j}
        -
        \frac{\partial g_{x,ij}(\x)}{\partial x^m}
        \right)
    \end{align}
    Contracting with $V_x^i V_x^j$ makes the first two terms equal because of the metric is symmetric.
    Therefore
    \begin{align}
        \Gamma_x(\x)(V_x, V_x)^k
        =
        g_x(\x)^{km}
        \left(
        \frac{\partial g_{x,im}(\x)}{\partial x^j}\, V_x^i V_x^j
        -
        \frac{1}{2}
        \frac{\partial g_{x,ij}(\x)}{\partial x^m}\, V_x^i V_x^j
        \right)
    \end{align}
    Converting this expression to matrix notation, and passing the derivatives through $V_x$ yields the result.
\end{proof}

\gausslemma*
\begin{proof}
    \label{app:normal-chart-identity-proof}
    First we will show that when $h_{\x_0}$ is normal, we have that $g_z(\z)\z = \z$ using Gauss's lemma.
    Since $h_{\x_0}$ is normal, we have that $f_{\x_0}(\z) = \exp_{\x_0}(\mathbf{J}_x \z)$, where $\mathbf{J}_x = Df_{\x_0}(\zero)$.
    By the chain rule, we have that $Df_{\x_0}(\z) = D\exp_{\x_0}(\mathbf{J}_x \z)\mathbf{J}_x$.
    Suppose $V_x = \mathbf{J}_x \z$ and $W_x = \mathbf{J}_x w$ for some $w \in \R^n$.
    Recall Gauss's lemma \citep[Chapter 3, Lemma 3.5]{docarmo1992riemannian}, which says that for $\x_0 \in \M_x$ and $V_x \in \R^n$ in the domain of $\exp_{\x_0}$, we have that for every $W_x \in \R^n$, the following expression holds
    \begin{align}
        \left(D\exp_{\x_0}(V_x)\,V_x\right)^\top
        g_x(\x_1)
        \left(D\exp_{\x_0}(V_x)\,W_x\right)
        =
        V_x^\top g_x(\x_0)\,W_x,
        \qquad
        \x_1 = \exp_{\x_0}(V_x)
    \end{align}
    where $D\exp_{\x_0}(V_x)$ denotes the Jacobian matrix of the exponential map evaluated at $V_x$.
    Then for $\x = f_{\x_0}(\z)$, the LHS can be rewritten as follows
    \begin{align}
        V_x^\top D\exp_{\x_0}(V_x)^\top g_x(\x)D\exp_{\x_0}(V_x) W_x &= \z^\top \mathbf{J}_x^\top D\exp_{\x_0}(V_x)^\top g_x(\x)D\exp_{\x_0}(V_x) \mathbf{J}_x w \\
        &= \z^\top Df_{\x_0}(\z)^\top g_x(\x)Df_{\x_0}(\z) w \\
        &= \z^\top g_z(\z)w
    \end{align}
    And the RHS can be written as
    \begin{align}
        V_x^\top g_x(\x_0)W_x = \z^\top \underbrace{\mathbf{J}_x^\top g_x(\x_0)\mathbf{J}_x}_{\mathbf{J}_x\text{ is $g$-orthogonal}} w = \z^\top w
    \end{align}
    Since $w$ is arbitrary, we are left with the identity, $g_z(\z)\z = \z$.

    Next, we will prove that the identity $g_z(\z)\z = \z$ implies that $h_{\x_0}$ is normal.
    The proof follows in three steps.
    \begin{enumerate}
        \item[{\circlenum{1}}] The identity near $\zero$ forces $g_z(\zero) = I_n$, and so $\mathbf{J}_x := Df_{\x_0}(\zero)$ is $g$-orthogonal.
        \item[{\circlenum{2}}] The identity on $\Z_{\x_0}$ forces $\Gamma_z(\z)(\z, \z) = \zero$, and so $f_{\x_0}(\z_t)$ is a geodesic.
        \item[{\circlenum{3}}] {\circlenum{1}} + {\circlenum{2}} $\implies h_{\x_0}(\x) = \mathbf{J}_x^{-1} \log_{\x_0}(\x)$
    \end{enumerate}

    \paragraph{\circlenum{1}}

    First we show that the identity implies that $\mathbf{J}_x$ is $g$-orthogonal.
    Let $\e \in \R^n$ be a unit length vector, $\|\e\|_2 = 1$, and let $s > 0$ be small enough for the vector $\z = s\e$ to be contained in $\Z_{\x_0}$.
    The identity then reads as $g_z(s\e)s\e = s\e$.
    Dividing by $s$, and letting $s \to 0$, we get that $g_z(\zero)\e = \e$.
    Since $\e$ is arbitrary, $g_z(\zero) = I_n$, and so $Df_{\x_0}(\zero)^\top g_x(\x_0)Df_{\x_0}(\zero) = I_n$, which states that $\mathbf{J}_x := Df_{\x_0}(\zero)$ is $g$-orthogonal.

    \paragraph{\circlenum{2}}

    Next we will show that the identity implies that $f_{\x_0}(\z_t)$ is a geodesic.
    Writing the Christoffel contraction from \cref{lem:contracted-christoffel} at $V_z = \z$ gives
    \begin{align}
        \Gamma_z(\z)(\z, \z) = g_z(\z)^{-1}\left[D_{\z}\left(g_z(\z)\z'\right) - \frac{1}{2}\nabla_z\left({\z'}^\top g_z(\z)\z'\right)\right], \quad \text{where }\z' = \mathrm{stop\_gradient}(\z)
        \label{app:eq-christoffel-radial}
    \end{align}
    Under the identity, each of the two terms in the bracket equals zero.

    Let $\delta(\z) = g_z(\z)\z - \z$.
    For the first term, consider the directional derivative $D_\z \delta(\z)$.
    Expanding the expression and rearranging terms yields
    \begin{align}
        D_{\z}(g_z(\z)\z') =  D\delta(\z)\z - \delta(\z)
    \end{align}
    where $D\delta(\z)$ is the Jacobian matrix of $\delta$ at $\z$.

    For the second term, consider $\nabla_z(\z^\top \delta(\z))$.
    This can be expanded and rearranged to yield
    \begin{align}
        \nabla_z\left({\z'}^\top g_z(\z)\z'\right) = D\delta(\z)^\top \z - \delta(\z)
    \end{align}

    Plugging both terms in gives the expression for the Christoffel contraction in terms of $\delta$:
    \begin{align}
        \Gamma_z(\z)(\z, \z) = g_z(\z)^{-1}\left[D\delta(\z)\z - \frac{1}{2}D\delta(\z)^\top \z - \frac{1}{2}\delta(\z)\right]
        \label{app:eq-christoffel-radial-simplified}
    \end{align}
    Since $\Z_{\x_0}$ is open, and $\delta(\z) = 0$ for all $\z \in \Z_{\x_0}$, $D\delta(\z) = 0$.
    So we have $\Gamma_z(\z)(\z, \z) = \zero$.

    Now suppose that $\z \in \Z_{\x_0}$, and let $\z_t = t\z$ for $t \in (0,1]$ be a curve.
    Note that $\z_t \in \Z_{\x_0}$ by star shapedness.
    We will show that $\z_t$ is a geodesic on $(\M_z, g_z)$.
    Let $V_t = \dot{z}_t^i \frac{\partial}{\partial z^i}$ be the velocity vector of the curve, and note that $\dot{z}_t = \z$ and $\ddot{z}_t = 0$.
    By definition of a geodesic, $\z_t$ is a geodesic if and only if it has zero covariant acceleration, $\nabla_{V_t}V_t$.
    Expanding the term, we find that its components are $0$:
    \begin{align}
        \nabla_{V_t}V_t &= \left(\dot{z}_t^i \frac{\partial \dot{z}_t^k}{\partial z^i} + \Gamma_{z,ij}^k \dot{z}_t^i \dot{z}_t^j\right)\frac{\partial}{\partial z^k} \\
        &= \left(\underbrace{\frac{d\dot{z}_t^k}{dt}}_{\ddot{z}_t=0} + \Gamma_z(\z_t)(\dot{z}_t, \dot{z}_t)^k\right)\frac{\partial}{\partial z^k} \\
        &= \left(\frac{1}{t^2}{\underbrace{\Gamma_z(\z_t)(\z_t, \z_t)}_{=0}}^k\right)\frac{\partial}{\partial z^k} \\
        &= 0
    \end{align}
    The second to last step follows from bilinearity of $\Gamma_z(\z)(\cdot, \cdot)$.
    The covariant acceleration of $\z_t$ is therefore $0$ and $\z_t$ is a geodesic.
    By coordinate invariance, $\x_t = f_{\x_0}(\z_t)$ is also a geodesic.

    \paragraph{\circlenum{3}}
    Recall the initial value form of geodesics, $\x_t = \exp_{\x_0}(tV_x)$, where $V_x$ is an initial velocity and $\x_0$ is the starting point.
    The initial velocity of our geodesic is $\frac{d}{dt}|_{t=0}f_{\x_0}(\z_t) = Df_{\x_0}(\zero)\z = \mathbf{J}_x \z$.
    Equating the expressions gives
    \begin{align}
        \label{app:eq-geodesic-exp-map}
        f_{\x_0}(\z_t) = \exp_{\x_0}(t\mathbf{J}_x \z), \quad t \in (0,1]
    \end{align}
    If we can invert the expression at $t=1$, then we are done, but before doing so we must verify that $\mathbf{J}_x \z$ lies in a domain on which $\exp_{\x_0}$ is invertible.

    Recall that the geodesic ball $\X_{\x_0}$ is defined as $\X_{\x_0} = \exp_{\x_0}(B)$, where $B = \{V_x \in \R^n \colon V_x^\top g_x(\x_0)V_x < \varepsilon^2\} \subset T_{\x_0}\M_x$.
    The proof is done if we show that $\mathbf{J}_x \z \in B$ so that \Cref{app:eq-geodesic-exp-map} can be inverted.
    By \circlenum{1}, $\mathbf{J}_x$ is $g$-orthogonal and so $(\mathbf{J}_x\z)^\top g_x(\x_0)(\mathbf{J}_x\z) = \z^\top\z$, and so an equivalent condition to $\mathbf{J}_x \z \in B$ is $\|\z\|_2 < \varepsilon$.

    Consider the curve $X_t = \log_{\x_0}\left(f_{\x_0}(\z_t)\right)$, which is continuous on $[0,1]$ and takes values in $B$.
    Whenever $t\|\z\|_2 < \varepsilon$, the vectors $X_t$ and $t\mathbf{J}_x \z$ both lie in $B$ and satisfy $\exp_{\x_0}(X_t) = f_{\x_0}(\z_t) = \exp_{\x_0}(t\mathbf{J}_x \z)$, so the injectivity of $\exp_{\x_0}$ on $B$ gives $X_t = t\mathbf{J}_x \z$.

    If $\|\z\|_2 \geq \varepsilon$, this would hold for every $t < t^* = \varepsilon / \|\z\|_2 \leq 1$, and continuity would give $X_{t^*} = t^*\mathbf{J}_x\z$, whose norm satisfies $X_{t^*}^\top g_x(\x_0)\,X_{t^*} = \varepsilon^2$ and contradicts the strict inequality that defines $B$.
    Therefore $\|\z\|_2 < \varepsilon$, and so $\mathbf{J}_x\z \in B$.
    This means that $f_{\x_0}(\z) = \exp_{\x_0}(\mathbf{J}_x \z) = \x$ admits a unique inverse, giving us the result
    \begin{align}
        h_{\x_0}(\x) = \mathbf{J}_x^{-1}\log_{\x_0}(\x), \quad \x \in \mathcal{U}_x
    \end{align}

\end{proof}

\normalityerror*
\begin{proof}
    \label{app:normal-loss-proof}
    We drop the dependence on $\theta$ for brevity.
    Let $\delta(\z) = g_z(\z)\z - \z$ denote the amount by which the normal chart identity \Cref{eq:normal-chart-identity} fails at $\z \in \mathcal{U}_z$.

    $\delta(\z)$ contains the $z$ components of a covector, because $g_z(\z)\z$ is the covector $V^\flat_z = g_z(\z)V_z$ associated with the vector $V_z = \z$ and $\z = \nabla_z \frac{1}{2}\|\z\|_2^2$ is the gradient of a function, so $\z$ contains the components of the covector $d\frac{1}{2}\|\z\|_2^2$.
    Therefore $\delta(\z)$ is the difference of two covectors components.
    This means that its squared $g$-norm is defined by $\delta(\z)^\top g_z(\z)^{-1}\delta(\z)$.

    First we will show that the integrand of \Cref{eq:normal-loss} is this squared $g$-norm divided by $\|\z\|_2^2$.
    Expanding the square norm gives
    \begin{align}
        \delta(\z)^\top g_z(\z)^{-1}\delta(\z)
        &= \left(g_z(\z)\z - \z\right)^\top g_z(\z)^{-1}\left(g_z(\z)\z - \z\right) \\
        &= \z^\top g_z(\z)\z - 2\z^\top \z + \z^\top g_z(\z)^{-1}\z
        \label{app:eq-residual-norm}
    \end{align}
    Dividing by $\|\z\|_2^2$ and writing $\e = \z / \|\z\|_2$ gives $\e^\top g_z(\z)\e + \e^\top g_z(\z)^{-1}\e - 2$, which is the integrand of \Cref{eq:normal-loss} for $\z \neq \zero$.

    Since $g_z(\z)^{-1}$ is positive definite, the integrand is a squared norm divided by a positive number, so it is nonnegative for $\z \neq \zero$, and it is zero at $\z = \zero$ by definition.
    Its expectation under $\rho_z$ is therefore nonnegative, which gives $\mathcal{L}_{\text{normal}} \geq 0$.
    For $\z \neq \zero$, the integrand equals zero if and only if $\delta(\z) = \zero$.

    Next we will show that $\mathcal{L}_{\text{normal}} = 0$ if and only if $h_{\x_0}$ is normal on $\mathcal{U}_x$.
    The set $\mathcal{U}_x$ is open, contains $\x_0$, lies in the geodesic ball $\X_{\x_0}$, and has star shaped image $\mathcal{U}_z$, so by \Cref{prop:normal-chart-identity}, $h_{\x_0}$ is normal on $\mathcal{U}_x$ if and only if $\delta$ vanishes on $\mathcal{U}_z$.
    If $\delta$ vanishes on $\mathcal{U}_z$ then the integrand vanishes on the support of $\rho_z$, so $\mathcal{L}_{\text{normal}} = 0$.
    Conversely, if $\mathcal{L}_{\text{normal}} = 0$ then the integrand, which is nonnegative and continuous on $\mathcal{U}_z \setminus \{\zero\}$, has zero expectation under $\rho_z$, and $\rho_z$ is positive on $\mathcal{U}_z$ because $\rho_x(\cdot|\x_0)$ is positive on $\mathcal{U}_x$ and $f_{\x_0}$ is a diffeomorphism.
    A nonnegative continuous function with zero expectation under a positive density on an open set vanishes on that set, so $\delta(\z) = \zero$ for all $\z \in \mathcal{U}_z \setminus \{\zero\}$, and $\delta(\zero) = \zero$ holds trivially.
\end{proof}

\section{Datasets}
\label{app:datasets-and-rmm-geometries}
All of the datasets are split into a training, validation and test set using a $(70,10,20)$ split.
The size and dimensionality of each dataset are listed below.

\begin{center}
    \begin{tabular}{lrrrr}
        \toprule
        & Alanine dipeptide & Disk & Two moons & Nested circles \\
        \midrule
        $n$ & $30$ & $2$ & $2$ & $2$ \\
        Elements & $750{,}000$ & $500{,}000$ & $500{,}000$ & $500{,}000$ \\
        \bottomrule
    \end{tabular}
\end{center}

\paragraph{Alanine dipeptide}
The dataset consists of $750{,}000$ snapshots of a molecule over three $250$ ns trajectories \citep{nuske2017markov,wehmeyer2018timelagged}.
Each snapshot contains the three-dimensional positions of $10$ heavy atoms (the full molecule has 12 extra hydrogen atoms that are not measured) so that each data point is a vector in $\R^{30}$.
The three trajectories were processed to have the same centroid and orientation, which removes $6$ degrees of freedom.
We remove the time ordering by shuffling the data, and preprocess by centering and scaling the data by a single scalar so the coordinates have unit root mean square spread.

This dataset is highly structured.
As mentioned, the data is preprocessed to shift and orient each trajectory to the same view, so the data manifold is restricted to a $24$ dimensional submanifold.
Additionally, the mean distance between pairs of atoms across snapshots can be used to determine whether or not each pair is bonded.
This yields a graph structure over the atoms in $3$d space that characterizes the shape of the molecule.
Most of the shape is captured by two main backbone dihedral angles, $\phi$ and $\psi$, which represent most of the variation of the molecule.
As such, it is common to project the data points down to the Ramachandran plot \citep{ramachandran1963stereochemistry}, which is a $2$d torus whose axes are each of the dihedral angles.
The projection of data onto this plane organizes into clusters that, at a high level, represent the different shapes that the molecule transitions in and out of during a trajectory.
For this reason, the Ramachandran plane is often used during evaluation to interpret trajectories.

\paragraph{Synthetic datasets}
The three synthetic datasets are point clouds of $500{,}000$ points in $\R^2$, each generated with a fixed random seed.
The two moons and the nested circles are generated with the \texttt{make\_moons}, with a noise standard deviation of $0.06$, and \texttt{make\_circles}, with an inner radius of $0.5$ and a noise standard deviation of $0.04$, from Scikit-learn \citep{pedregosa2011scikit}.
We generate the disk data from a density proportional to $\exp(-1.8(x_1 + 1))$ on the unit disk using rejection sampling.
After the split, each dataset is centered and scaled by a single scalar so that the training set has unit root mean square spread, and the same transform is applied to the validation and test sets.

\paragraph{Licenses and data permissions}
The alanine trajectories are publicly provided by the Computational Molecular Biology Group at Freie Universit\"at Berlin through \texttt{mdshare} under CC BY 4.0, which permits reuse with attribution.\footnote{Dataset and provider's license statement: \url{https://markovmodel.github.io/mdshare/ALA2/} and \url{https://markovmodel.github.io/mdshare/}.}
The remaining datasets are generated synthetically; no human participants are involved.
The MFM reference implementation and Jaxtyping use the MIT license; JAX, Equinox, Optax, and Diffrax use Apache 2.0; NumPy, SciPy, and Scikit-learn use BSD 3-Clause; \texttt{tqdm} uses MPL 2.0 and MIT; and Matplotlib uses its PSF-based license.

\section{Geometries}
\label{app:experimental-geometries}
We test our method on synthetic metrics that admit closed form solutions, RBF metrics on three point clouds, and a metric learned from alanine dipeptide with Riemannian metric matching \citep{bamberger2026metricmatching}.

\subsection{Synthetic metrics}
\label{sec:synthetic-metrics}
We generate data on the six synthetic manifolds in \Cref{tab:synthetic-metrics}, whose exponential maps, logarithmic maps and geodesic distances are known in closed form.
Normal coordinates would already give a valid solution at their center, so we apply a nonlinear change of coordinates before presenting the data to our method.
For each manifold, we construct one fixed observed coordinate system $x$, then sample base points $\x_0$ at which we train the normal chart $h_{\x_0}^\theta$.
The observed coordinates are shared across base points.
See \Cref{fig:synthetic-coordinates} for a visual overview.

\begin{figure}[t]
    \centering
    \includegraphics[width=\textwidth]{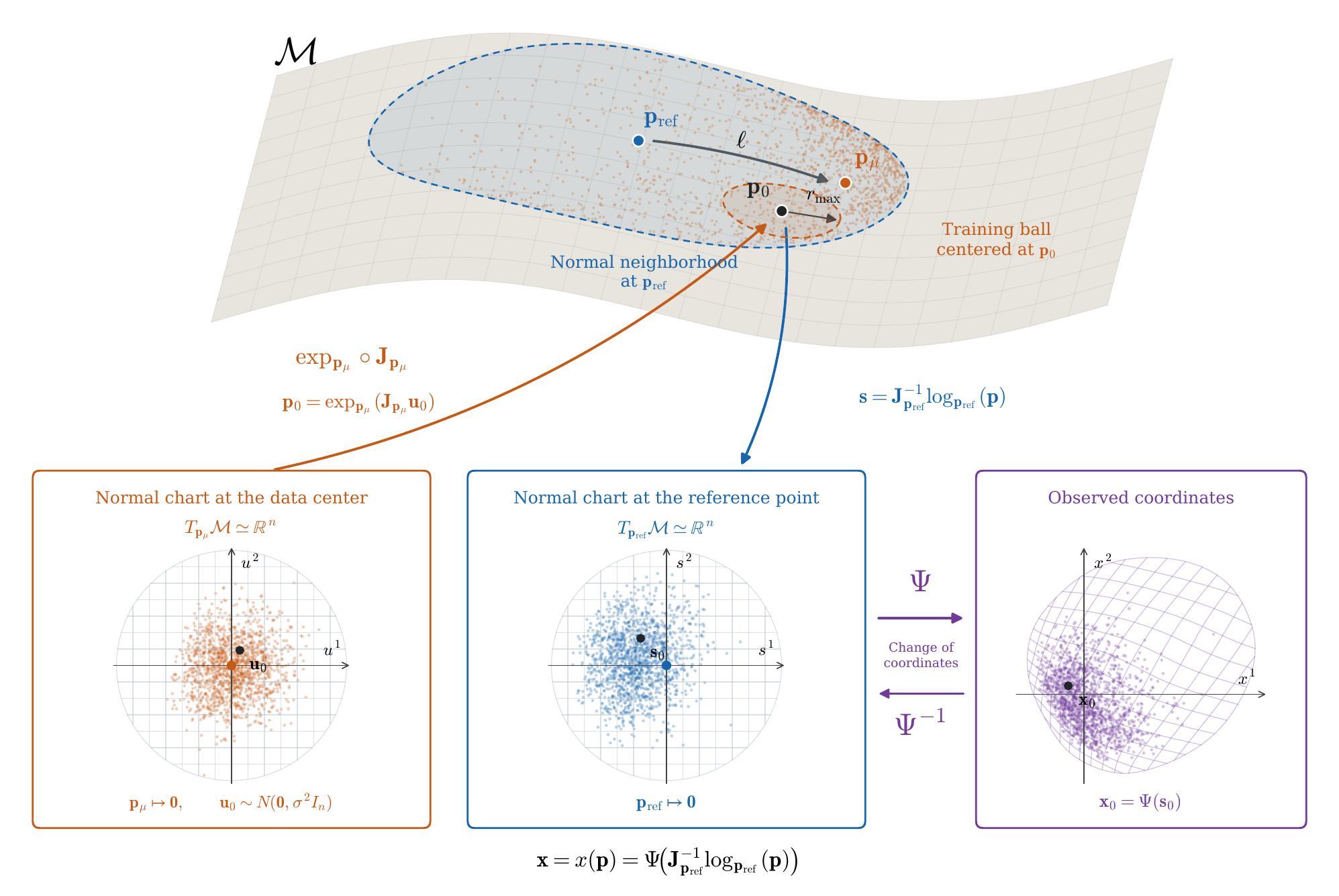}
    \caption{Synthetic data generation.
    We sample in normal coordinates at the data center $\p_\mu$ (orange), express the same points in normal coordinates at the reference point $\p_{\mathrm{ref}}$ (blue), and apply $\Psi$ to obtain observed coordinates (purple).
    The learned chart is centered at a sampled base point $\p_0$, whose observed coordinates are $\x_0=x(\p_0)$.}
    \label{fig:synthetic-coordinates}
\end{figure}

Let $n=\dim\M$ and let $\iota$ be the injectivity radius.
Let $\p\in\M$ for a point and $\mathbf{J}_{\p}\colon\R^n\to T_{\p}\M$ for a $g$-orthonormal frame at that point.

We center the normal coordinates at a fixed reference point $\p_{\mathrm{ref}}$ and sample base points around a data center $\p_\mu$.
We choose $\p_\mu$ at distance $\ell$ from $\p_{\mathrm{ref}}$ in a uniform direction $\e\in S^{n-1}$:
\begin{align}
    \p_\mu&=\exp_{\p_{\mathrm{ref}}}(\ell\mathbf{J}_{\p_{\mathrm{ref}}}\e),
    &\ell&=\begin{cases}
        0.3\iota, & \iota<\infty, \\
        1, & \iota=\infty
    \end{cases}
\end{align}

For a point $\p$ in the geodesic ball of radius $\iota$ around $\p_{\mathrm{ref}}$, let $s$ be a normal coordinate chart at $\p_{\mathrm{ref}}$ and $x$ be another coordinate system, defined by
\begin{align}
    s(\p)&=\mathbf{J}_{\p_{\mathrm{ref}}}^{-1}\log_{\p_{\mathrm{ref}}}(\p),
    & \x=x(\p)&=\Psi\!\left(s(\p)\right)
\end{align}
$\Psi\colon\R^n\to\R^n$ is fixed and invertible.
When $\iota=\infty$, these coordinates cover all of $\M$.
We use the metric $g_x=(x^{-1})^*g$ in the observed coordinates.

To construct $\Psi$, we compose four random orthogonal maps $Q_0,\ldots,Q_3$ and three elementwise nonlinear maps $f_1,f_2,f_3$ in the order
\begin{align}
    \Psi=Q_3\circ f_3\circ Q_2\circ f_2\circ Q_1\circ f_1\circ Q_0
\end{align}
Each $f_k$ applies the following invertible map to an input coordinate $c$:
\begin{align}
    c\mapsto m+s\sinh\!\left(\operatorname{arcsinh}\!\left(\frac{c-m}{s}\right)+b\right),
    \qquad s>0
\end{align}
We draw $b\sim\operatorname{Uniform}(-0.8,0.8)$ independently for each coordinate and each $f_k$.
$m$ and $s$ are chosen to standardize each coordinate based on a batch of data at initialization.
We use the same $\Psi$ for all base points and data splits.

To sample base points, we draw tangent coordinates at the data center $\p_\mu$ from
\begin{align}
    \uu_0\sim N(\zero,\sigma^2I_n), \quad \text{and }\sigma=\frac{1}{\sqrt n}
    \begin{cases}
        0.3\iota, & \iota<\infty, \\
        1, & \iota=\infty
    \end{cases}
\end{align}

Around each base point we train on a geodesic ball of radius $r_{\max}=0.25\iota$ when $\iota<\infty$, or $r_{\max}=1$ otherwise.
For finite $\iota$, we redraw $\uu_0$ until this entire ball lies inside the domain of $x$:
\begin{align}
    d_g\!\left(\p_{\mathrm{ref}},\;\exp_{\p_\mu}(\mathbf{J}_{\p_\mu}\uu_0)\right)+r_{\max}<\iota
\end{align}

We then generate $n_{\mathrm{bp}}$ base points from independent accepted draws,
\begin{align}
    \x_0=x\!\left(\exp_{\p_\mu}(\mathbf{J}_{\p_\mu}\uu_0)\right)
\end{align}
At each $\x_0$, we train $h_{\x_0}^\theta$ on the geodesic ball
\begin{align}
    \mathcal{U}_x(\x_0)=\{\x\in\M_x:d_g(\x_0,\x)<r_{\max}\}
\end{align}
To sample a point in this ball, we independently draw $\mathbf{v}\sim N(\zero,I_n)$ and $t\sim\operatorname{Uniform}(0,1)$, then set
\begin{align}
    \uu=r_{\max}t^{1/n}\frac{\mathbf{v}}{\|\mathbf{v}\|_2}, \qquad
    \x=\exp_{\x_0}\!\left(g_x(\x_0)^{-1/2}\uu\right).
    \label{eq:synthetic-conditional-sampling}
\end{align}
This generative process defines $\rho_x(\cdot\mid\x_0)$ with positive density throughout $\mathcal{U}_x(\x_0)$.
For each base point, we draw $n_{\mathrm{train}}$, $n_{\mathrm{val}}$ and $n_{\mathrm{test}}$ training, validation and test points respectively.

\begin{table}[p]
    \centering
    \makebox[\textwidth][c]{%
    \begin{minipage}{\dimexpr\textwidth+108pt\relax}
    \centering
    \caption{Overview of synthetic manifolds.  $V,W$ are tangent vectors on each manifold. Dims lists the tested intrinsic dimensions.}
    \label{tab:synthetic-metrics}
    \begingroup
    \fontsize{8.5}{11}\selectfont
    \definecolor{geometryink}{RGB}{37,57,75}
    \definecolor{geometrynote}{RGB}{243,246,248}
    \definecolor{geometrytext}{RGB}{70,79,87}
    \newcommand{\geometryspace}[1]{%
        \begingroup\renewcommand{\arraystretch}{1}%
        \begin{tabular}[t]{@{}l@{}}#1\end{tabular}\endgroup}
    \setlength{\fboxsep}{7pt}
    \setlength{\tabcolsep}{5pt}
    \renewcommand{\arraystretch}{1.3}
    \begin{tabular*}{\textwidth}{@{\extracolsep{\fill}}lcccccc@{}}
        \toprule
        \textbf{Space / element} & \textbf{Metric} & \textbf{Exp. map} & \textbf{Log. map} & \textbf{Distance} & \textbf{Inj. radius} & \textbf{Dims} \\
        \midrule
        \multicolumn{7}{@{}l@{}}{\textcolor{geometryink}{\textbf{Sphere}}} \\[2pt]
        \geometryspace{$\p_0,\p_1\in S^n$} & $V^\top W$ & $\cos(r)\p_0+\frac{\sin r}{r}V$ & $\frac{\theta}{\sin\theta}(\p_1-\cos(\theta)\p_0)$ & $\theta$ & $\pi$ & $2,30$ \\
        \addlinespace[5pt]
        \multicolumn{7}{@{}l@{}}{\colorbox{geometrynote}{%
        \begin{minipage}{\dimexpr\textwidth-2\fboxsep\relax}
        \fontsize{8}{10.5}\selectfont\color{geometrytext}\raggedright
        $\p_0^\top V=0$, $r=\|V\|_2$, $\theta=\arccos(\p_0^\top\p_1)$.
        \end{minipage}}} \\
        \addlinespace[10pt]
        \multicolumn{7}{@{}l@{}}{\textcolor{geometryink}{\textbf{Flat torus}}} \\[2pt]
        \geometryspace{$\p_0,\p_1\in\mathbb{T}^n$} & $V^\top W$ & $[\p_0+V]$ & $\operatorname{wrap}(\p_1-\p_0)$ & $\|\operatorname{wrap}(\p_1-\p_0)\|_2$ & $\pi$ & $2,30$ \\
        \addlinespace[5pt]
        \multicolumn{7}{@{}l@{}}{\colorbox{geometrynote}{%
        \begin{minipage}{\dimexpr\textwidth-2\fboxsep\relax}
        \fontsize{8}{10.5}\selectfont\color{geometrytext}\raggedright
        $\mathbb{T}^n=\R^n/(2\pi\mathbb{Z})^n$. $[\cdot]$ denotes equivalence modulo $(2\pi\mathbb{Z})^n$; $\operatorname{wrap}$ selects the representative in $[-\pi,\pi)^n$.
        \end{minipage}}} \\
        \addlinespace[10pt]
        \multicolumn{7}{@{}l@{}}{\textcolor{geometryink}{\textbf{Hyperbolic}}} \\[2pt]
        \geometryspace{$\p_0,\p_1\in\mathbb{H}^n$} & $\langle V,W\rangle_L$ & $\cosh(r)\p_0+\frac{\sinh r}{r}V$ & $\frac{\theta}{\sinh\theta}(\p_1-\cosh(\theta)\p_0)$ & $\theta$ & $\infty$ & $2,30$ \\
        \addlinespace[5pt]
        \multicolumn{7}{@{}l@{}}{\colorbox{geometrynote}{%
        \begin{minipage}{\dimexpr\textwidth-2\fboxsep\relax}
        \fontsize{8}{10.5}\selectfont\color{geometrytext}\raggedright
        $\mathbb{H}^n=\{\p_0\in\R^{n+1}:\langle\p_0,\p_0\rangle_L=-1,\ p_0^{n+1}>0\}$.\\[2pt] $\langle V,W\rangle_L=\sum_{i=1}^n V^iW^i-V^{n+1}W^{n+1}$.\\[2pt] Tangents satisfy $\langle\p_0,V\rangle_L=0$; $r=\sqrt{\langle V,V\rangle_L}$, $\theta=\operatorname{arccosh}(-\langle\p_0,\p_1\rangle_L)$.
        \end{minipage}}} \\
        \addlinespace[10pt]
        \multicolumn{7}{@{}l@{}}{\textcolor{geometryink}{\textbf{SPD matrices}}} \\[2pt]
        \geometryspace{$\p_0,\p_1\in\mathrm{SPD}(p)$} & $\tr(\p_0^{-1}V\p_0^{-1}W)$ & $\p_0^{1/2}\exp(A)\p_0^{1/2}$ & $\p_0^{1/2}\log(B)\p_0^{1/2}$ & $\|\log(B)\|_F$ & $\infty$ & $3,28$ \\
        \addlinespace[5pt]
        \multicolumn{7}{@{}l@{}}{\colorbox{geometrynote}{%
        \begin{minipage}{\dimexpr\textwidth-2\fboxsep\relax}
        \fontsize{8}{10.5}\selectfont\color{geometrytext}\raggedright
        $\p_0=\p_0^\top\succ0$, with symmetric tangents $V,W$; dimension $p(p+1)/2$. We use $p=2,7$.\\[2pt] $A=\p_0^{-1/2}V\p_0^{-1/2}$, $B=\p_0^{-1/2}\p_1\p_0^{-1/2}$.
        \end{minipage}}} \\
        \addlinespace[10pt]
        \multicolumn{7}{@{}l@{}}{\textcolor{geometryink}{\textbf{Grassmannian}}} \\[2pt]
        \geometryspace{$[\p_0],[\p_1]\in\operatorname{Gr}(k,p)$} & $\tr(V^\top W)$ & $\exp(\Omega)\p_0$ & $A\arctan(\Sigma)B^\top$ & $\|\arctan(\Sigma)\|_F$ & $\pi/2$ & $4,30$ \\
        \addlinespace[5pt]
        \multicolumn{7}{@{}l@{}}{\colorbox{geometrynote}{%
        \begin{minipage}{\dimexpr\textwidth-2\fboxsep\relax}
        \fontsize{8}{10.5}\selectfont\color{geometrytext}\raggedright
        $\p_0\in\R^{p\times k}$ represents a subspace, $\p_0^\top \p_0=I_k$, $\p_0^\top V=0$; dimension $k(p-k)$. We use $(k,p)=(2,4),(5,11)$.\\[2pt] $\Omega=V\p_0^\top-\p_0 V^\top$; thin SVD $A\Sigma B^\top=(I_p-\p_0\p_0^\top)\p_1(\p_0^\top \p_1)^{-1}$, with $\arctan$ acting on the diagonal of $\Sigma$.
        \end{minipage}}} \\
        \addlinespace[10pt]
        \multicolumn{7}{@{}l@{}}{\textcolor{geometryink}{\textbf{Rotation group}}} \\[2pt]
        \geometryspace{$\p_0,\p_1\in\mathrm{SO}(n)$} & $\tr(V^\top W)$ & $\p_0\exp(\p_0^\top V)$ & $\p_0\log(\p_0^\top \p_1)$ & $\|\log(\p_0^\top \p_1)\|_F$ & $\pi\sqrt{2}$ & $3,28$ \\
        \addlinespace[5pt]
        \multicolumn{7}{@{}l@{}}{\colorbox{geometrynote}{%
        \begin{minipage}{\dimexpr\textwidth-2\fboxsep\relax}
        \fontsize{8}{10.5}\selectfont\color{geometrytext}\raggedright
        $\p_0^\top \p_0=I_n$, $\det \p_0=1$; dimension $n(n-1)/2$. We use $n=3,8$. Tangents have the form $V=\p_0K$, $K^\top=-K$.
        \end{minipage}}} \\
        \addlinespace[10pt]
        \bottomrule
    \end{tabular*}
    \endgroup
    \end{minipage}}
\end{table}

\subsection{RBF Metric}
\label{sec:rbf-metric}
The synthetic datasets of the exponential map experiment in \Cref{app:exp-map-experiments} use the RBF metric of metric flow matching (MFM) \citep{kapusniak2024metric}.
This metric is constructed so that geodesics avoid regions without data.
\begin{align}
    g_x(\x) = \frac{1}{h(\x) + \rho}\, I_n, \qquad
    h(\x) = \sum_{k=1}^{K} w_k \exp\left(-\frac{\lambda_k}{2}\|\x - \mathbf{c}_k\|_2^2\right)
\end{align}
where $h$ is a radial basis function network with $K = 50$ centers $\mathbf{c}_k$, bandwidths $\lambda_k$ and weights $w_k$, and $\rho > 0$ is an offset.
Although \citet{kapusniak2024metric} state a more general diagonal metric, their released code fits a single network and applies it to every coordinate.

The centers are selected as the $K$-means centroids of the training set.
Let $\sigma_k$ denote the root mean square spread of the $k$-th cluster per coordinate.
We set $\lambda_k = 1/(2(\kappa\sigma_k)^2)$ with $\kappa = 1.5$.

Let $\x^{(1)},\ldots,\x^{(M)}$ denote $M = 50{,}000$ points drawn at random from the training set.
We select the weights by solving the following optimization problem
\begin{align}
    \min_{w_1,\ldots,w_K \geq 10^{-4}} J, \qquad
    J = \frac{1}{M}\sum_{i=1}^{M}\big(1 - h(\x^{(i)})\big)^2.
\end{align}

The fit makes $h \approx 1$ on the data, so $g_x$ is close to $(1 + \rho)^{-1} I_n$ there and grows to $\rho^{-1} I_n$ away from it.
This makes paths that leave the data have large lengths, so geodesics will tend to stay close to the data.
We set $\rho = (1 - J)/2.75$ following MFM, where $J$ is evaluated at the fitted weights and averaged over the validation set in place of the $M$ training points.

\subsection{Diffusion geometry}
\label{sec:diffusion-geometries}

\Cref{fig:four-diffusion-charts}(a) was built with non-parametric diffusion geometries built from 2d point clouds \citep[Section 3.1]{jones2026computing}.
The pointclouds we use come from $m=50{,}000$ samples from three toy datasets depicting an uneven concentration on a disk, two moons, and nested circles \Cref{app:datasets-and-rmm-geometries}.
The disk has density proportional to $\exp(-1.8x^1)$ on $\|\x\|_2<1$ and both the two moons and nested circles have Gaussian noise standard deviations $0.06$ and $0.04$, respectively, with circle radii $1$ and $0.5$.

Let $\{\x^{(j)}\}_{j=1}^{m} \subset \R^n$ denote the point cloud.
We build a Markov chain over the cloud from a variable-bandwidth Gaussian kernel supported on each point's $32$ nearest neighbors,
\begin{align}
    k(\x,\y) = \exp\left(-\frac{\|\x - \y\|_2^2}{\epsilon\, \rho(\x)\rho(\y)}\right), \qquad
    \tilde{\mathbf{K}}_{jl} = \frac{k(\x^{(j)}, \x^{(l)})}{\big(q(\x^{(j)})\, q(\x^{(l)})\big)^{\nicefrac{(2 - \hat{d})}{4}}}, \qquad
    \mathbf{P}_{jl} = \frac{\tilde{\mathbf{K}}_{jl}}{\sum_{l'=1}^m \tilde{\mathbf{K}}_{jl'}},
\end{align}
where $\rho(\x) = q(\x)^{-\nicefrac{1}{2}}$ is the bandwidth function and $q$ is a kernel density estimate.
The bandwidth of $q$ at each point is the root-mean-square distance to the point's seven nearest neighbors.
We choose the scale $\epsilon$ and intrinsic dimension estimate $\hat{d}$ jointly following the procedure of \citet{jones2026computing}.
For each $\epsilon \in \{2^{-10},2^{-9.75},\ldots,2^{9.75}\}$, we average the kernel weights and compute the slope of this average on a log-log scale.
We select the value of $\epsilon$ with the largest slope and set $\hat{d}$ to twice that slope.
Since the local bandwidths depend on the sampling density, we first use this procedure to estimate the density, update the bandwidths, and then repeat it to obtain the values of $\epsilon$ and $\hat{d}$ used in $\tilde{\mathbf{K}}$.
The symmetrized row sums of $\mathbf{P}$, normalized to sum to one, form the measure $\boldsymbol{\mu}$ over the cloud.

We compute the leading, $n_0=240$ for the disk and $n_0=120$ for the other two datasets, eigenfunctions of $\mathbf{P}$ and store their values on the cloud in the $m \times n_0$ matrix $\mathbf{U}$, which satisfies $\mathbf{U}^T \diag(\boldsymbol{\mu}) \mathbf{U} = \mathbf{I}_{n_0}$.
The eigenfunctions are the smoothest functions on the cloud, so replacing the values $\mathbf{x}_c \in \R^m$ of each coordinate function by $\mathbf{U}\mathbf{U}^T \diag(\boldsymbol{\mu})\, \mathbf{x}_c$ removes its noisiest components \citep[Computational Note 7]{jones2026computing}.

The geometry corresponding to the point cloud is given through the \emph{carr\'e du champ} of the smoothed coordinates.
It is a covariance under one step of the Markov chain \citep[Equation 6]{jones2026computing} given by,
\begin{align}
    \Gamma_j(x_c, x_{c'}) = \frac{2}{\epsilon\, \rho(\x^{(j)})^2} \mathrm{Cov}\big[x_c(X),\, x_{c'}(X) \,:\, X \sim \mathbf{P}_{j}\big].
\end{align}
We average the per-point tensors once under $\mathbf{P}$, symmetrize them, and divide all of them by the median over the cloud of the largest eigenvalue.

Finally, the cometric is defined as a soft, Gaussian weighted average of the CDC over the point cloud
\begin{align}
    \label{eq:diffusion-cometric}
    g_x(\x)^{-1} &= \delta I_n + (1-\delta)a(\x)\sum_{j=1}^m w^{(j)}(\x) \Gamma_j(\x, \x), \notag \\
    w^{(j)}(\x) &= \frac{\exp\big(-\|\x - \x^{(j)}\|_2^2 / 2\sigma^2\big)}{\sum_{l=1}^m \exp\big(-\|\x - \x^{(l)}\|_2^2 / 2\sigma^2\big)}
\end{align}
where $\delta = 2\times 10^{-3}$ and $\sigma$ is $0.08$ for the disk and the median distance to the eigth neighbor for the two moons and nested circles datasets.
$a(\x)$ is an extra factor to have the geometry behave well away from the support of the data.
For two moons and nested circles, $a(\x)=1$ and for the disk,
\begin{align}
    a(\x)=\frac{S(\x)}{S(\x)+\tau},\qquad
    S(\x)=\sum_{j=1}^m\exp\left(-\frac{\|\x-\x^{(j)}\|_2^2}{2\sigma_s^2}\right),
\end{align}
with $(\sigma_s,\tau)=(0.02,1)$.

\paragraph{Chart fitting for the diffusion illustration}
For \Cref{fig:four-diffusion-charts}, we fit one chart per displayed base point to its $4{,}096$ nearest neighbors within the same connected component, excluding the base point.
Each chart has four affine coupling layers with dense linear mixing, two gated conditioner blocks of width $32$, and a one-block base point encoder of width $16$ with $8$ output features.
We use metric preconditioning and center inputs at the base point, scaling by the median neighbor distance measured with $g_x(\x_0)$.
We minimize the normal loss with Adam at a constant learning rate of $3\times10^{-4}$ and unit gradient clipping, using FP64 chart parameters and batches of $64$ endpoints ($128$ for the disk).
Let $r_0$ be the median encoded neighbor norm at initialization.
We train for $5{,}000$ steps with $\gamma=0.1r_0/\sqrt{2}$, then restart Adam from the best checkpoint with $\gamma=0.5r_0$ for another $2{,}000$ steps on moons and circles or $5{,}000$ on the disk.
Every $100$ steps we evaluate fixed independent bridge draws on the same training endpoints and retain the lowest-loss checkpoint for the current $\gamma$.
The cloud seed is $0$ and the three chart initialization seeds are $0,1,2$.

\paragraph{Normalizing-flow pullback illustration}
For the pullback panel of \Cref{fig:four-diffusion-charts}, we fit an unconditional flow $h$ to the uneven-concentration dataset by maximum likelihood under a standard Gaussian prior, using the four-layer affine coupling architecture described above.
We compose its encoder with the componentwise standard Gaussian CDF, $H=\Phi\circ h$, and define the pullback metric $g_x=DH^\top DH$, whose volume density equals the fitted data density.
We freeze this metric and fit one flow chart at each of the same three base points, using the diffusion illustration's chart architecture and training procedure with FP32.

For both geometries, we decode coordinate grids restricted to disks of a common radius within each panel, orient the grids using rotation-only ICA on encoded neighbors, and overlay the same $600$ observations.
The backgrounds use separate logarithmic color scales: blue shows inverse volume density for the diffusion metric, and green shows volume density for the pullback metric.
Panels (b) and (d) show normalized histograms of the learned coordinate norms of each base point's $4{,}096$ nearest neighbors, using $40$ equally spaced bins shared across base points within each geometry.
Colors identify corresponding base points across the geometry panels and histograms.

\subsection{Riemannian metric matching}
\label{sec:rmm-geometries}
For higher dimensional datasets, we use Riemannian metric matching (RMM) \citep{bamberger2026metricmatching} to learn its geometry.
Let $p_{\varepsilon|0}(\x_\varepsilon \mid \x) = \mathcal{N}(\x_\varepsilon \mid \x, \varepsilon I_n)$ denote the transition distribution at time $\varepsilon$ of Brownian motion started at $\x$, let $p_{0,\varepsilon}(\x, \x_\varepsilon) = p_{\text{data}}(\x)\, p_{\varepsilon|0}(\x_\varepsilon \mid \x)$ denote the joint distribution of a data point and its noised version, let $p_\varepsilon(\x_\varepsilon)$ denote the marginal of the noised point, and let $p_{0|\varepsilon}(\cdot \mid \x_\varepsilon)$ denote the posterior over the clean point given the noised one.
The posterior mean is the denoiser
\begin{align}
    \bar{\x}_0(\x_\varepsilon) = \mathbb{E}_{\x_0 \sim p_{0|\varepsilon}(\cdot \mid \x_\varepsilon)}\left[\x_0\right]
\end{align}
and the geometry is the family of cometrics, indexed by $\varepsilon$, given by the scaled posterior covariance
\begin{align}
    \label{eq:posterior-cometric}
    g_{\varepsilon,x}(\x_\varepsilon)^{-1} = \frac{1}{2\varepsilon}\, \mathrm{Cov}_{\x_0 \sim p_{0|\varepsilon}(\cdot \mid \x_\varepsilon)}\left[\x_0\right] = \frac{1}{2}\, D\bar{\x}_0(\x_\varepsilon)
\end{align}
where the second equality is Tweedie's formula for the posterior covariance \citep{efron2011tweedie}.
This is the mean centered carr\'e du champ of RMM.
At a noised point $\x_\varepsilon$ the covariance spans the directions along which clean data that could have produced $\x_\varepsilon$ vary, so the cometric is aligned with the data manifold.
RMM fits a parametric cometric, $g_{\varepsilon,x}^\theta$, to the target by minimizing the following regression:
\begin{align}
    \label{eq:metric-matching}
    \mathcal{L}(\theta)
    &= \underset{\varepsilon \sim p(\varepsilon)}{\mathbb{E}} \, \underset{\x_\varepsilon \sim p_\varepsilon}{\mathbb{E}}\left[\left\|g_{\varepsilon,x}^\theta(\x_\varepsilon)^{-1} - g_{\varepsilon,x}(\x_\varepsilon)^{-1}\right\|_F^2\right] \\
    &= \underset{\varepsilon \sim p(\varepsilon)}{\mathbb{E}} \, \underset{(\x, \x_\varepsilon) \sim p_{0,\varepsilon}}{\mathbb{E}}\left[\left\|g_{\varepsilon,x}^\theta(\x_\varepsilon)^{-1} - \frac{1}{2\varepsilon}\left(\x - \bar{\x}_0(\x_\varepsilon)\right)\left(\x - \bar{\x}_0(\x_\varepsilon)\right)^\top\right\|_F^2\right] + \text{const}. \notag
\end{align}
The second expression is tractable to optimize provided that we can compute the denoiser.
For this, we either pretrain a denoiser, or download a pretrained denoiser from the internet, which we denote by $D_\varepsilon(\x_\varepsilon) \approx \bar{\x}_0(\x_\varepsilon)$.

\subsubsection{Low rank parameterization}
\label{app:low-rank-rmm-parameterization}
We parameterize the cometric using the approximately low rank parameterization of RMM:
\begin{align}
    g_{\varepsilon,x}^\theta(\x_{\varepsilon})^{-1} = \delta I_n + (1 - \delta)M_{\varepsilon}^\theta(\x_{\varepsilon})^\top M_{\varepsilon}^\theta(\x_{\varepsilon})
\end{align}
where $M_{\varepsilon}^\theta(\x_{\varepsilon}) \in \mathbb{R}^{d \times n}$ is the unconstrained output of a neural network, and $d \ll n$.

We train with $\delta$ incorporated into the objective, which yields
\begin{align}
    \mathcal{L}(\theta)
    &= \underset{\varepsilon \sim p(\varepsilon)}{\mathbb{E}} \, \underset{(\x, \x_\varepsilon) \sim p_{0,\varepsilon}}{\mathbb{E}}\Bigl[
    (1-\delta)^2 \|M_{\varepsilon}^\theta(\x_\varepsilon)M_{\varepsilon}^\theta(\x_\varepsilon)^\top\|_F^2
    + 2\delta(1-\delta)\|M_{\varepsilon}^\theta(\x_\varepsilon)\|_F^2 \notag\\
    &\qquad\qquad\qquad\qquad\qquad\qquad\qquad - \frac{1-\delta}{\varepsilon}\bigl\|M_{\varepsilon}^\theta(\x_\varepsilon)\bigl(\x-D_\varepsilon(\x_\varepsilon)\bigr)\bigr\|_2^2
    \Bigr] + \text{const.}
\end{align}

\subsubsection{Denoiser architecture}
\label{app:rmm-denosier-architecture}
We train a denoiser for alanine dipeptide.
The denoiser is parameterized by a residual network, and predicts the clean point under additive Gaussian corruption, so its noise argument is the variance $\varepsilon$ used by RMM.
The denoiser uses the parameterization of \citet{karras2022elucidating}, where $D_\phi(\x_\varepsilon, \varepsilon) = c_{\text{skip}}(\varepsilon)\x_\varepsilon + c_{\text{out}}(\varepsilon) F_\phi(c_{\text{in}}(\varepsilon)\x_\varepsilon, \varepsilon)$.
Here, $F_\phi$ is a trainable neural network conditioned on the noise variance, and the fixed functions $c_{\text{in}}$, $c_{\text{out}}$, and $c_{\text{skip}}$ scale its input, its output, and the skip connection, respectively.
We set the reference standard deviation of the coordinates to $\sigma_{\text{data}}=1$, since every dataset is standardized to unit root mean square spread, giving
\begin{align}
    c_{\text{in}}(\varepsilon) = \frac{1}{\sqrt{\varepsilon+1}}, \qquad
    c_{\text{out}}(\varepsilon) = \sqrt{\frac{\varepsilon}{\varepsilon+1}}, \qquad
    c_{\text{skip}}(\varepsilon) = \frac{1}{\varepsilon+1}
\end{align}

The network $F_\phi$ is a linear layer from $\R^n$ to width $w$, followed by $L$ residual blocks of width $w$, and a linear layer back to $\R^n$ whose weights are initialized to zero so that the denoiser starts as the identity map.
The embedding for $\varepsilon$, $\mathbf{e}(\varepsilon) \in \R^{c}$, is built from $32$ Fourier features of $\log\varepsilon$ at the frequencies $1000^{-k/32}$ for $k = 0, \dots, 31$, whose cosines and sines are passed through two linear layers with a SiLU between them.
Each residual block is conditioned on the embeddings using FiLM \citep{perez2018film},
\begin{align}
    \mathbf{h} \mapsto \mathbf{h} + W_2\,\mathrm{SiLU}\!\big((\one + \mathbf{a})\odot\mathrm{SiLU}(W_1\mathbf{h}) + \mathbf{b}\big), \qquad (\mathbf{a}, \mathbf{b}) = W_c\,\mathrm{SiLU}(\mathbf{e}(\varepsilon))
\end{align}
where $W_2$ is initialized to zero so that each block also starts as the identity.
We use $w = 512$, $L = 8$, and $c = 128$, which is about $5$M parameters.
See \Cref{tab:vector-model-architectures} for the denoiser and cometric architectures and training details.

\paragraph{Denoiser training}
We train the denoiser on the training split with the denoising loss of \citet{karras2022elucidating}, which weights the squared error by $c_{\text{out}}(\varepsilon)^{-2}$,
\begin{align}
    \mathcal{L}(\phi) = \underset{\varepsilon}{\mathbb{E}}\,\underset{(\x, \x_\varepsilon) \sim p_{0,\varepsilon}}{\mathbb{E}}\left[c_{\text{out}}(\varepsilon)^{-2}\left\|D_\phi(\x_\varepsilon, \varepsilon) - \x\right\|_2^2\right]
\end{align}
We draw $\varepsilon$ log-uniformly over $[0.01, 5]$.
We train with Adam, clip gradients to unit global norm, and ramp the learning rate linearly over the first $1{,}000$ steps.
We train the denoiser for a maximum of $1{,}000{,}000$ steps.
We keep an exponential moving average of the weights with decay $0.999$, evaluate its loss on the validation split every $1{,}000$ steps with early stopping enabled, and select the averaged checkpoint with the lowest validation loss.
A run stops once $50{,}000$ steps pass without an improvement in the validation loss.
We sweep the peak learning rate over $\{5\times10^{-5}, 10^{-4}, 2\times10^{-4}, 4\times10^{-4}, 8\times10^{-4}\}$ with three seeds each, and select the run whose selected checkpoint has the smallest loss on the validation split.

\begin{table}[t]
    \centering
    \caption{Denoiser and cometric architectures and training on alanine dipeptide.}
    \label{tab:vector-model-architectures}
    {\small
    \setlength{\tabcolsep}{8pt}
    \begin{tabular}{lc}
        \toprule
        Property & Alanine dipeptide \\
        \midrule
        \multicolumn{2}{l}{\textbf{Residual network architecture}} \\
        Dimension $n$ & $30$ \\
        Width $w$ & $512$ \\
        Residual blocks $L$ & $8$ \\
        Embedding size $c$ & $128$ \\
        Total parameters (M) & $5.32$ \\
        \midrule
        \multicolumn{2}{l}{\textbf{Denoiser}} \\
        Training points & $525{,}000$ \\
        Validation points & $75{,}000$ \\
        Test points & $150{,}000$ \\
        Noise range & $[0.01, 5]$ \\
        Batch size & $512$ \\
        Maximum training steps & $1{,}000{,}000$ \\
        EMA decay & $0.999$ \\
        Selected learning rate & $5\times10^{-5}$ \\
        Selected step & $245{,}000$ \\
        Validation loss & $6.336$ \\
        Test loss & $6.361$ \\
        \midrule
        \multicolumn{2}{l}{\textbf{Cometric}} \\
        Rank $d$ & $30$ \\
        Floor $\delta$ & $10^{-4}$ \\
        Total parameters (M) & $5.76$ \\
        Selected learning rate & $5\times10^{-5}$ \\
        Selected step & $251{,}000$ \\
        Validation loss & $-0.7793$ \\
        Test loss & $-0.7762$ \\
        \bottomrule
    \end{tabular}
    }
\end{table}

\paragraph{Cometric architecture}
The cometric shares the exact same architecture as the denoiser, except that the last layer maps to a larger vector of size $\R^{d\times n}$ to produce the matrix $M_\varepsilon^\theta$, following \Cref{app:low-rank-rmm-parameterization}.
The cometric does not use the coefficients $c_{\text{in}}$, $c_{\text{out}}$, and $c_{\text{skip}}$, so $M_\varepsilon^\theta(\x_\varepsilon)$ is the output of the network evaluated at $\x_\varepsilon$ directly.
We use a full rank parameterization with $d=n$ and set $\delta = 1\times 10^{-4}$.

\paragraph{Cometric training}
We train the cometric on the training data with the objective from \Cref{app:low-rank-rmm-parameterization}, centered on the selected denoiser.
The optimization procedure is identical to how we trained the denoiser.

\section{Experiment details}
\raggedbottom
\label{app:experiemnt-details}
\label{app:protocol}

Here we give the details of our experiments.

\paragraph{Timing details}
\label{app:timing-details}
We measure wall clock time for all of the experiments in this paper in the same way.
This method takes a compiled JAX program as input, with its inputs already in device memory, so that compilation time and other preprocessing steps are not included in the reported time.
The program is run once as a warmup, and then $5$ times after, from which we report the median time.
All timings are reported on one NVIDIA A16 in float32 and every program is implemented in JAX \citep{jax2018github} and compiled by XLA with its default settings.

\subsection{Logarithmic map experiment details}
\label{sec:normal-recovery-protocol}

We generate data on each manifold following the procedure in \Cref{sec:synthetic-metrics}.
For each manifold, we generate $2048$ basepoints for training the amortized models, $64$ for validation and $128$ for testing.
For each base point, we sample $1024$ associated endpoints to train the MFM and flow chart with.
These methods are then evaluated on endpoints of the $128$ test base points.
All methods use the same test endpoints.
The closed form manifolds provide exact references.
The data geometries use the reference solutions described in \Cref{app:exp-map-experiments}.
After reference filtering, the two moons has queries at $124$ test base points and the other geometries have queries at all $128$.
We report the mean of $\Rlog(\x_0)$ and its standard error across the test base points, separately for each manifold and dimension.

\subsubsection{Methods}
\label{sec:geodesic-interpolation-based-solvers}
Our baselines employ geodesic interpolation to retrieve normal coordinate and logmap estimates.
These methods parameterize a curve $\gamma_{(\cdot)}^\phi\colon [0,1] \to \M_x$ that starts at a base point $\x_0$ and ends at a query point $\x$, and approximate a geodesic by minimizing its energy or solving the geodesic equation.
The iterative algorithms (GP and GEORCE) are run until the relative decrease of their energy between iterations falls below a tolerance $\epsilon_{\text{stop}} = 1\times 10^{-5}$, and are all parameterized and optimized in float32.
Each solver is stopped at \mbox{$1{,}000$} max iterations, and fails if its solution is nonfinite or the solver reaches this cap without meeting its stopping criterion.
Failed queries are left out of both the wall clock time and the errors.

\paragraph{GP}
We implement the fixed-point solver of \citet{arvanitidis2019fast}, which represents a curve by a Gaussian-process posterior mean conditioned on its endpoints and $K$ acceleration vectors at uniformly spaced times between $0$ and $1$, with the endpoints included.
We use a straight-line prior mean, a squared-exponential kernel with length scale $\lambda = \sqrt{1/(2(K-1))}$, and for numerical stability in FP32, acceleration regularization of $10^{-2}$.
The model iteratively infers the acceleration of the posterior mean of a Gaussian process, that is conditioned on the endpoint conditions and the accelerations.

\paragraph{GEORCE}
We also use the GEORCE \citep{rygaard2026georce} algorithm to solve for a geodesic interpolation.
This approach parameterizes the geodesic with $K$ knot points, but uses an iterative algorithm to solve for the knot points without assuming any particular curve parameterization.
GEORCE recovers the geodesic interpolation by solving the discretized energy as a control problem with boundary constraints.
GEORCE returns $K$ knot points rather than a curve, so we interpolate its knots with the piecewise cubic Hermite interpolant with backward differences of \citet{morrill2021cdeonline} for evaluation.
The initial velocity of GEORCE is the forward difference $(K+1)(\y_1-\x_0)$, where $\y_0=\x_0,\y_1,\dots,\y_K,\y_{K+1}=\x$ at the times $t_k=k/(K+1)$ are the knots that are solved for with the algorithm.

\paragraph{Metric flow matching (MFM)}
Our neural network baseline is MFM \citep{kapusniak2024metric}.
This method parameterizes each curve as follows:
\begin{align}
    \x_{\text{curve},t}^{\theta} = (1-t)\x_0 + t\x_1 + t(1-t)\mathrm{NN}_\theta(t,\x_0,\x_1)
\end{align}
where $\mathrm{NN}_\theta$ is a neural network.
We construct $\mathrm{NN}_\theta$ as a gated residual network.
MFM is trained to minimize the energy $E$.
The training procedure is identical to the flow chart's, below.

We choose the number of blocks and hidden size separately for two MFM baselines.
The first has a parameter count modestly close to that of the four-layer flow chart, and the second has a comparable time to compute the fast normal coordinate estimates $\z^{(i)}$ of \Cref{app:synthetic-experiment-eval-metrics} for a batch of $1024$ queries.
For the flow chart, this is $h_{\x_0}^\theta(\x^{(i)})$, and for MFM it is $\mathbf{J}_x^{-1}\dot{\x}_0^{(i)}$.
\Cref{tab:amortized-model-sizes} lists the model sizes and computation times.
Times are microseconds per query on an NVIDIA A16 in FP32.

\begin{table}[t]
    \centering
    \caption{Sizes of the amortized models and the wall clock cost of computing 1024 normal coordinate estimates at a fixed base point.}
    \label{tab:amortized-model-sizes}
    {
    \small
    \setlength{\tabcolsep}{5pt}
    \begin{tabular}{r r r r r r r r r}
        \toprule
        & \multicolumn{2}{c}{Flow chart} & \multicolumn{3}{c}{MFM, parameter matched} & \multicolumn{3}{c}{MFM, time matched} \\
        \cmidrule(lr){2-3} \cmidrule(lr){4-6} \cmidrule(lr){7-9}
        $n$ & Parameters & $\mu$s & Width $\times$ blocks & Parameters & $\mu$s & Width $\times$ blocks & Parameters & $\mu$s \\
        \midrule
        $2$  & $187{,}504$ & $0.98$ & $64 \times 15$ & $188{,}674$ & $1.51$ & $64 \times 9$ & $113{,}410$ & $1.01$ \\
        $3$  & $191{,}296$ & $1.00$ & $64 \times 15$ & $188{,}867$ & $1.71$ & $64 \times 6$ & $75{,}971$ & $0.97$ \\
        $4$  & $195{,}104$ & $1.05$ & $96 \times 7$ & $197{,}572$ & $1.39$ & $64 \times 6$ & $76{,}164$ & $1.01$ \\
        $28$ & $291{,}296$ & $1.33$ & $96 \times 10$ & $288{,}604$ & $2.05$ & $64 \times 6$ & $80{,}796$ & $1.32$ \\
        $30$ & $299{,}728$ & $1.35$ & $128 \times 6$ & $309{,}790$ & $1.58$ & $64 \times 9$ & $118{,}814$ & $1.30$ \\
        \bottomrule
    \end{tabular}
    }
\end{table}

\paragraph{Flow chart}
\label{sec:amortized-methods}
\label{sec:amortized-training}
\label{app:models}
The flow chart used for the amortized MFM comparison and the preconditioner is a conditional neural spline flow \citep{durkan2019neural} with $4$ rational quadratic spline coupling layers, each with $8$ bins on $[-10, 10]$ in both low and high dimensions, that is initialized to be close to the identity transformation.
Before each coupling layer, we place a parametric elementwise scale and shift that is conditioned on the base point encoder and a dense invertible linear map (with an LU parameterization).
Each coupling layer parameterizes its spline using a FiLM conditioned \citep{perez2018film} residual network with three residual blocks of width $32$.
The conditioning is computed with a separate residual network that encodes the base point into a $64$-dimensional feature vector for all coupling layers with $8$ gated residual blocks of width $64$.
The fixed-base-point and runtime experiments use the wider coupling networks described in \Cref{app:capacity}.
We train our method with the normal loss in \Cref{proposition:normal-loss} under the sampling distribution of \Cref{sec:normal-flow-chart}, with $\gamma = 0.1\, r_{\text{max}}/\sqrt{n}$.

We minimize this loss with Adam in single precision at a learning rate of $3\times 10^{-4}$ with linear warmup over the first $100$ steps.
Each training batch holds $16$ base points with $32$ endpoints each, for a total of $512$ pairs, so the base point encoder is evaluated $16$ times per step and the coupling layers on every pair.
For each pair we draw one time $t \sim \mathrm{U}(0,1)$ and $\z_t$ according to the sampling distribution in \Cref{sec:normal-flow-chart}.
Gradient updates are clipped to have unit norm.
We keep an exponential moving average of the parameters with decay $0.999$, and this average is the model we evaluate and report.
The validation loss is evaluated on held out pairs every $500$ steps.
The learning rate is constant until step $1{,}000{,}000$ and then decays linearly to zero at step $2{,}000{,}000$, where training ends, and we keep the average with the lowest validation loss.

\subsubsection{Evaluation metrics}
\label{app:synthetic-experiment-eval-metrics}
Recall that each method we test takes as input a base point $\x_0$, and query points, $\{\x^{(1)},\dots,\x^{(N)}\}$, and produces a set of coordinates, $\{\z^{(1)},\dots,\z^{(N)}\}$, and tangent vectors, $\{U_x^{(1)}, \dots, U_x^{(N)}\}$, representing the normal coordinates and log map solutions.
In this section, we review how each method produces the coordinates and tangent vector solutions.
The baseline methods, and our method, offer a curve $\x_{\cdot}^{(i)}$ from the base point to a query point $\x^{(i)}$ at $t=1$, that after training approximates a geodesic.
We use each of these curves to construct a fast logmap and normal coordinate solution.

\paragraph{Method solutions}
The fast logmap estimate is simply the initial velocity of the curve, and the corresponding normal coordinate is the logmap scaled by a $g$-orthogonal matrix.
Let $\mathbf{J}_x = g_x(\x_0)^{-\frac{1}{2}}$ be this matrix.
Then the fast logmap and normal coordinate solutions corresponding to a method applied to base point $\x_0$ and query $\x^{(i)}$ are
\begin{align}
    U_x^{(i)} = \dot{\x}_0^{(i)}, \qquad \z^{(i)} = \mathbf{J}_x^{-1}\dot{\x}_0^{(i)}
\end{align}
For the flow chart, whose curve is $\x_t^{(i)} = f_{\x_0}^\theta(t\,h_{\x_0}^\theta(\x^{(i)}))$, the fast normal coordinate is the chart's own output, $\z^{(i)} = h_{\x_0}^\theta(\x^{(i)})$, and the fast logmap is $U_x^{(i)} = Df_{\x_0}^\theta(\zero)\z^{(i)}$ as described in \Cref{sec:background-riemann-normal-coordinates}.

The GP and GEORCE methods fail if the solution is not finite, or the solver stops before meeting its stopping criterion within the maximum number of iterations.
We write \mbox{$\mathcal{I}(\x_0)\subseteq\{1,\dots,N\}$} to denote the indices of the queries of base point \mbox{$\x_0$} at which the method does not fail, and $\mathcal{J}(\x_0)$ for the indices of queries with valid reference solutions.
MFM and the flow chart fail at a query only if its solution is not finite.
Each method is graded on its own successful queries.
The failed fraction is below $0.72\%$ for each numerical logmap setting, and no evaluated chart logmap query fails.

\paragraph{Evaluation metrics}
We evaluate the solutions against the ground truth coordinates and logarithmic maps, $\{\s^{(1)}, \dots, \s^{(N)}\}$ and $\{V_x^{(1)}, \dots, V_x^{(N)}\}$.

\paragraph{Wallclock}
Since we are concerned with speed, we time how long it takes (in wall clock time) to compute the solutions, as measured by \Cref{app:timing-details}.
We let $\tau(\x_0)$ denote the wall clock time to compute the solutions $\z^{(i)}$ and $U_x^{(i)}$ for every $i\in\mathcal{I}(\x_0)$, divided by $|\mathcal{I}(\x_0)|$, so that $\tau(\x_0)$ is a time per query.
Since $\mathcal{I}(\x_0)$ is not known in advance, we first run the method on all $N$ queries of $\x_0$ to find it, and then time a second run on the queries in $\mathcal{I}(\x_0)$ alone.

For the GEORCE and GP baselines, $\tau(\x_0)$ includes the time to solve for the curve.
$\tau(\x_0)$ does not include the time to pretrain the flow chart and MFM on the training set.
For the flow chart, the time to compute the base point encoding is included in the time.

Because a batch of queries iterates until its slowest curve stops, we sweep over batch sizes in $\{64, 128, 256, 512, 1024\}$ for each of the GP and GEORCE, at each knot count and geometry on the first $16$ test base points and take the batch size with the smallest mean of $\tau(\x_0)$ over these base points.
The flow chart and both MFMs are evaluated at a batch size of $1024$.
On this best batch size, we record the time and peak memory usage of every test base point.
\Cref{tab:solver-batch-sizes} lists the batch sizes selected for the GP and GEORCE.

\begin{table}[t]
    \centering
    \caption{Batch sizes selected for the GP and GEORCE at each knot count $K$, the size with the smallest mean of $\tau(\x_0)$ over the first $16$ test base points.}
    \label{tab:solver-batch-sizes}
    {
    \small
    \setlength{\tabcolsep}{4pt}
    \begin{tabular}{l r r r r r r r r r}
        \toprule
        & & \multicolumn{4}{c}{GP} & \multicolumn{4}{c}{GEORCE} \\
        \cmidrule(lr){3-6} \cmidrule(lr){7-10}
        Manifold & $n$ & $K=4$ & $8$ & $16$ & $32$ & $K=4$ & $8$ & $16$ & $32$ \\
        \midrule
        Sphere & $2$ & $1024$ & $1024$ & $1024$ & $1024$ & $1024$ & $1024$ & $1024$ & $512$ \\
        Flat torus & $2$ & $1024$ & $1024$ & $1024$ & $1024$ & $1024$ & $1024$ & $1024$ & $1024$ \\
        Hyperbolic & $2$ & $1024$ & $1024$ & $1024$ & $512$ & $1024$ & $1024$ & $1024$ & $1024$ \\
        SPD & $3$ & $1024$ & $1024$ & $1024$ & $512$ & $1024$ & $512$ & $256$ & $128$ \\
        Grassmann & $4$ & $512$ & $128$ & $64$ & $64$ & $64$ & $64$ & $64$ & $64$ \\
        Rotations & $3$ & $1024$ & $1024$ & $1024$ & $64$ & $256$ & $64$ & $64$ & $64$ \\
        \midrule
        Sphere & $30$ & $64$ & $64$ & $64$ & $64$ & $64$ & $128$ & $64$ & $64$ \\
        Flat torus & $30$ & $64$ & $128$ & $64$ & $64$ & $64$ & $128$ & $128$ & $64$ \\
        Hyperbolic & $30$ & $64$ & $64$ & $64$ & $64$ & $64$ & $128$ & $64$ & $64$ \\
        SPD & $28$ & $128$ & $64$ & $64$ & $64$ & $64$ & $64$ & $64$ & $64$ \\
        Grassmann & $30$ & $64$ & $64$ & $64$ & $64$ & $64$ & $64$ & $64$ & $64$ \\
        Rotations & $28$ & $128$ & $64$ & $64$ & $64$ & $64$ & $64$ & $64$ & $64$ \\
        \bottomrule
    \end{tabular}
    }
\end{table}
We report the mean of $\tau(\x_0)$ over the test base points.

\paragraph{Logmap}
To evaluate the logmap solutions, we compute the normalized RMSE $\Rlog(\x_0)$ defined in \Cref{sec:logmap}.
We compute the root mean squared $g$-norm error between the logmap solution and the ground truth
\begin{align}
    \mathcal{E}_{\text{logmap}}(\x_0) =
    \left(\frac{1}{|\mathcal{I}(\x_0)|}\sum_{i\in\mathcal{I}(\x_0)}
    \left\|U_x^{(i)}-V_x^{(i)}\right\|_{g}^2\right)^\frac{1}{2}
\end{align}

We divide $\Elog(\x_0)$ by the reference scale $r_{\mathrm{ref}}(\x_0)$, which uses all queries in $\mathcal J(\x_0)$ and is shared across methods.
We average $\Rlog(\x_0)$ over base points with valid readings and report its mean and standard error multiplied by $100$.

\subsubsection{Accuracy comparisons}
\label{app:synthetic-experiment-reports}

\paragraph{Amortized accuracy}
In this experiment, we pretrain a flow chart and the two MFMs described in \Cref{tab:amortized-model-sizes} on the training set of each geometry, and then evaluate them on the test set.
Each is trained once, from one seed, so the error bars measure variation across base points.
For the baselines, we evaluate them directly on the test set.
We sweep the knot count of the GP and GEORCE, over $K \in \{4, 8, 16, 32\}$.

To estimate the uncertainty in the ratios of \Cref{fig:accuracy-amortized}, we resample the held out base points with replacement $10{,}000$ times, using the same base points for the flow chart and MFM in each sample.
We compute the ratio of mean errors for each sample and report the $2.5$th and $97.5$th percentiles.

\paragraph{Fixed base point accuracy}
We evaluate the numerical baselines in \Cref{fig:accuracy} at the same four base points as the non-amortized flow charts, with indices $26$, $44$, $67$ and $100$.
For each baseline, we select the knot count with the lowest mean of $\Rlog$ over all test base points, which gives $K=16$ for the GP and $K=32$ for GEORCE on all sixteen geometries.
Each flow chart is trained once, from one seed, so the error bars measure variation across base points.
The training queries at these base points are separate from the test queries, as described in \Cref{app:capacity}.

\paragraph{Convergence of the baselines}
\Cref{tab:baseline-convergence} shows the number of test queries that fail, for the GP and GEORCE at each knot count over all test queries of all test base points.

\begin{table}[t]
    \centering
    \caption{Number of test queries that fail for the GP and GEORCE at each knot count $K$, out of $131{,}072$ queries per geometry, $1{,}024$ at each of the $128$ test base points.}
    \label{tab:baseline-convergence}
    {
    \small
    \setlength{\tabcolsep}{4pt}
    \begin{tabular}{l r r r r r r r r r}
        \toprule
        & & \multicolumn{4}{c}{GP} & \multicolumn{4}{c}{GEORCE} \\
        \cmidrule(lr){3-6} \cmidrule(lr){7-10}
        Manifold & $n$ & $K=4$ & $8$ & $16$ & $32$ & $K=4$ & $8$ & $16$ & $32$ \\
        \midrule
        Sphere & $2$ & $246$ & $32$ & $3$ & $0$ & $0$ & $0$ & $0$ & $0$ \\
        Flat torus & $2$ & $0$ & $0$ & $0$ & $0$ & $0$ & $0$ & $0$ & $0$ \\
        Hyperbolic & $2$ & $0$ & $0$ & $0$ & $0$ & $0$ & $0$ & $0$ & $0$ \\
        SPD & $3$ & $0$ & $0$ & $0$ & $0$ & $0$ & $0$ & $0$ & $0$ \\
        Grassmann & $4$ & $1$ & $0$ & $0$ & $0$ & $0$ & $0$ & $0$ & $0$ \\
        Rotations & $3$ & $0$ & $0$ & $0$ & $0$ & $0$ & $0$ & $0$ & $0$ \\
        \midrule
        Sphere & $30$ & $592$ & $0$ & $0$ & $0$ & $0$ & $0$ & $0$ & $0$ \\
        Flat torus & $30$ & $18$ & $0$ & $0$ & $0$ & $0$ & $0$ & $0$ & $0$ \\
        Hyperbolic & $30$ & $151$ & $0$ & $0$ & $0$ & $0$ & $0$ & $0$ & $0$ \\
        SPD & $28$ & $199$ & $0$ & $0$ & $0$ & $0$ & $0$ & $0$ & $0$ \\
        Grassmann & $30$ & $41$ & $0$ & $0$ & $0$ & $0$ & $0$ & $0$ & $0$ \\
        Rotations & $28$ & $21$ & $0$ & $0$ & $0$ & $0$ & $0$ & $0$ & $0$ \\
        \bottomrule
    \end{tabular}
    }
\end{table}

\subsection{Exponential map experiment details}
\label{app:exp-map-experiments}

\subsubsection{Data and geometry}
\label{app:exp-map-experiments-data-and-geometry}
We use the twelve closed form manifolds of \Cref{sec:synthetic-metrics}, three geometries constructed from synthetic datasets using the MFM RBF metric, and the diffusion geometry of alanine dipeptide learned with RMM as described in \Cref{app:datasets-and-rmm-geometries}.
After pretraining each RMM geometry, we fix $\varepsilon = 0.1$, so the flow charts, the ODE solvers, and the ground truth all use the cometric $g_{0.1,x}^\theta$.

\paragraph{Synthetic datasets}
The synthetic datasets have $500{,}000$ points total, and are split into a training, validation and test set with a $(70, 10, 20)$ split.
We use the RBF metric from MFM, which we detail in \Cref{sec:rbf-metric}.
The ground truth is generated in FP64 but the experiments are run in FP32.

\paragraph{Data selection}
\label{app:neighbor-data-selection}
To select endpoints, we sample from each base point based on proximity.
For this, we first run clustering on each dataset (except for alanine dipeptide due to its size) and give each point a label corresponding to its connected component.
This clustering is done with the HDBSCAN \citep{campello2013density,mcinnes2017hdbscan} algorithm from Scikit-learn \citep{pedregosa2011scikit}.
We run it separately on each split, with a minimum cluster size of $1\%$ of the split and $16$ samples per core point, and allow a single cluster.
Points that are labeled as noise are given the label of their nearest neighbor with a label.
We give all points in alanine dipeptide the same cluster label.

Then, given a base point $\x_0$ that is drawn uniformly from the data split, we select an endpoint $\x$ by drawing uniformly from the $k=\lfloor \nu \cdot N \rfloor$ Euclidean nearest neighbors with the same cluster label as $\x_0$.
$N$ is the number of points in the split.
We use $\nu = 0.1$ throughout so that each flow chart sees $10\%$ of the data in the split.
The base points and endpoints are drawn from the same data split.

\subsubsection{Flow chart architecture and training details}
For the MFM comparison and the flow chart preconditioner, we use the four-layer architecture of \Cref{sec:amortized-methods}.
We train these flow charts for at most $2{,}000{,}000$ steps, with early stopping on the data geometries.
On the data geometries, the learning rate remains constant after warmup, and training stops after $100$ validation checks ($50{,}000$ steps) without improvement.
For the capacity sweep, we use the training procedure of \Cref{app:capacity}, without early stopping.
We train with a batch containing $16$ base points, with $32$ endpoints for each.
During evaluation, we measure the average loss over the base points of the validation set, each with a single endpoint drawn from the same procedure with a fixed random seed.

We set $\gamma = 0.1\,\bar{r}/\sqrt{n}$, where $\bar{r}$ is the median Euclidean distance from a base point to its $k$-th nearest neighbor within its cluster, which is the radius of the region that endpoints are drawn from.
We empirically evaluate this over $64$ base points before training.

\subsubsection{Evaluation set generation}
This experiment evaluates different methods on triplets $(\x_0, V_x, \x_1)$, where $V_x \approx \log_{\x_0}(\x_1)$.
On the synthetic manifolds, the logmap is known in closed form.
On the data geometries, we estimate it using the initial velocity of the GP method of \Cref{sec:geodesic-interpolation-based-solvers}.
On each data geometry, we sample endpoint pairs according to the procedure in \Cref{app:neighbor-data-selection} over the test set.
We draw $B = 128$ base points uniformly from the test set and $M = 512$ endpoints for each, without replacement, under a fixed random seed, which gives $65{,}536$ triplets per dataset before the discards below.
The triplets are computed in FP64 before evaluation and saved to file, except that on the RMM geometries the network $M_{\varepsilon}^\theta$ and its derivatives are evaluated in FP32, while the GP iteration and all linear algebra remain in FP64 for numerical stability.
We found that in FP32, the GP method consistently diverged.
For all of the geometries, we use $K=50$ acceleration nodes, a length scale of $\sqrt{1/(2(K+1))}$, and an acceleration regularizer of $10^{-4}$.

Since the norm of the logmap solution is supposed to equal the geodesic distance between the base point and the query, the logmap solution can be refined by estimating the length and rescaling.
Let $L^{(i)} = \int_0^1\|\dot{\x}^{(i)}_t\|_g \, dt$ be the length of the GP curve, which we estimate with Simpson's rule on $101$ evenly spaced times.
The improved solution is
\begin{align}
    \label{eq:logmap-refinement}
    V_x^{(i)} = L^{(i)}\frac{\dot{\x}_0^{(i)}}{\|\dot{\x}_0^{(i)}\|_g}
\end{align}

Once the curve is fit, we save this refined logmap solution as $V_x$, together with the entire curve so that we can evaluate against it.
We do not include a solution if the GP did not converge, or if (for the synthetic datasets) the GP crosses low data density regions, which we measure with the RBF metric's network $h$ from \Cref{sec:rbf-metric}, keeping a curve only if $h \geq 0.5$ at $101$ evenly spaced times along it.
The GP converges when every squared fixed point residual falls below $10^{-4}$ within $500$ iterations.
\Cref{tab:ablation-dataset-sizes} lists the number of retained test triplets.
On these geometries, the errors measure agreement with the numerical reference, so we compare the GP baseline with the same solver using more acceleration nodes.

\paragraph{Efficient acceleration computation for GP on RMM cometrics}
For metric parameterized via \Cref{app:low-rank-rmm-parameterization}, solving for geodesic interpolations for the GP is expensive because every acceleration evaluation differentiates through a cometric solve.
However, by taking advantage of the structure of the metric, we can avoid derivative through the inverse.
At each of its nodes, the GP evaluates the geodesic acceleration $\ddot{\x} = -\Gamma_x(\x)(\dot{\x}, \dot{\x})$ from the position $\x$ and velocity $\dot{\x}$ of the curve there.
If it is computed using the expression from \Cref{eq:contracted-christoffel}, this differentiates the metric $g_{\varepsilon,x}^\theta(\x)$, which is the inverse of the cometric.

Instead, we compute the acceleration from Hamilton's equations.
Let $\p = g_{\varepsilon,x}^\theta(\x)\dot{\x}$ be the momentum.
Under the low rank parameterization, Hamilton's equations for $H(\x, \p) = \frac{1}{2}\p^\top g_{\varepsilon,x}^\theta(\x)^{-1}\p$ are
\begin{align}
    \dot{\x} = \delta\p + (1 - \delta)M_{\varepsilon}^\theta(\x)^\top M_{\varepsilon}^\theta(\x)\p, \qquad \dot{\p} = -\frac{1 - \delta}{2}\nabla_{\x}\left\|M_{\varepsilon}^\theta(\x)\p\right\|_2^2
\end{align}
where no gradient flows through $\p$.
Differentiating $\dot{\x}$ in time gives
\begin{align}
    \ddot{\x} = (1 - \delta)\left(D_{\dot{\x}}M_{\varepsilon}^\theta(\x)^\top M_{\varepsilon}^\theta(\x) + M_{\varepsilon}^\theta(\x)^\top D_{\dot{\x}}M_{\varepsilon}^\theta(\x)\right)\p + g_{\varepsilon,x}^\theta(\x)^{-1}\dot{\p}
\end{align}
where $D_{\dot{\x}}M_{\varepsilon}^\theta(\x)$ is the directional derivative of $M_{\varepsilon}^\theta(\x)$ at $\x$ along $\dot{\x}$.
This expression of the acceleration is cheap to compute because it requires a single linear solve, for $\p$, which is never differentiated.
Every derivative can be computed cheaply with Jacobian vector products of vector Jacobian products.

\subsubsection{Methods}
The baselines simulate the geodesic with initial position $\x_0$ and velocity $V_x$ by integrating Hamilton's equations for Hamiltonian $H(\x, \p) = \frac{1}{2}\p^\top g_x(\x)^{-1}\p$,
\begin{align}
    \label{app:eq-hamiltons-equations}
    \dot{\x}_t = g_x(\x_t)^{-1}\p_t, \qquad \dot{\p}_t = -\frac{1}{2}\nabla_{\x}\left(\p_t^\top g_x(\x_t)^{-1}\p_t\right), \qquad \p_0 = g_x(\x_0)V_x
\end{align}
from $t = 0$ to $t = 1$, so that $\x_1 = \exp_{\x_0}(V_x)$.
On the RMM geometry, we directly parameterize the cometric, so each update does not require a matrix inversion.
Only the initial momentum $\p_0$ requires an inversion.
We integrate with four fixed-step solvers from diffrax \citep{kidger2021on} of increasing order, Euler, Heun, Tsit5 \citep{tsitouras2011runge}, and Dopri8 \citep{prince1981high}, which cost one, two, six, and thirteen evaluations of \Cref{app:eq-hamiltons-equations} per step, plus one more at the first step for Tsit5 and Dopri8 and one evaluation of the cometric at $\x_0$ to form $\p_0$.
We run each solver with $S$ equal steps over $[0, 1]$.
For a solver that costs $c$ evaluations per step, we sweep $5$ step counts spaced geometrically from $1$ to $\lfloor 128/c \rfloor$ and rounded, which gives $S \in \{1, 3, 11, 38, 128\}$ for Euler, $\{1, 3, 8, 23, 64\}$ for Heun, $\{1, 2, 5, 10, 21\}$ for Tsit5, and $\{1, 2, 3, 5, 9\}$ for Dopri8.
These choices are so that no baseline evaluates the cometric more than $129$ times.
On the data geometries, we also use the solver's interpolation to record the state at the $50$ times used for the reference curve.
A query fails if the solver returns a nonfinite endpoint.
The solvers are run in FP32.

The flow chart computes $\exp_{\x_0}(V_x) = f^\theta_{\x_0}\!\left(Dh^\theta_{\x_0}(\x_0)V_x\right)$ as described in \Cref{sec:background-riemann-normal-coordinates}.

\subsubsection{Evaluation metrics}

For each method, we compute $\Eexp(\x_0)$ and $\Rexp(\x_0)$ as defined in \Cref{sec:expmap}.
The numerator uses the queries in $\mathcal I(\x_0)$ for which the method returns a finite endpoint.
The reference scale $r_{\mathrm{chord}}(\x_0)$ uses all queries in $\mathcal J(\x_0)$ and is shared across methods.
We average $\Rexp(\x_0)$ over base points with valid readings and report its mean and standard error multiplied by $100$.

We time every method with the procedure of \Cref{app:timing-details}.
Let $\tau(\x_0)$ be the wall clock time to compute the endpoints at a test base point $\x_0$, divided by the number of queries, so that $\tau(\x_0)$ is a time per query.
Before discarding queries without a valid reference solution, there are $1{,}024$ queries per base point on the closed form manifolds and $512$ on the data geometries.
For the flow chart, $\tau(\x_0)$ includes encoding $\x_0$ once, and the chart reuses this encoding for all of its queries.
For the ODE solvers, $\tau(\x_0)$ includes forming the initial momenta.
Every method evaluates the queries of a base point in parallel on a GPU, and we report $\tau$, the mean of $\tau(\x_0)$ over test base points.

\subsection{Capacity and fixed base point experiments}
\label{app:ablations}
\label{app:capacity}

We vary the capacity of the flow chart and compare amortized and non-amortized training on the closed form manifolds, the three synthetic datasets, and alanine dipeptide.
We use both the low and high dimensional closed form manifolds, where the logarithmic and exponential maps are known in closed form.
On the data geometries, we use the numerical references of \Cref{app:exp-map-experiments}.
See \Cref{sec:synthetic-metrics}, \Cref{sec:rbf-metric} and \Cref{app:datasets-and-rmm-geometries} for a description of each dataset, the geometry they define, and how the training, validation and test sets are generated.
Each dataset produces a set of $B$ base points, and for each base point, produces $N$ query points from which to train or evaluate with.
An element of each dataset consists of a triplet of the form $(\x_0, V_x, \x)$, where $V_x = \log_{\x_0}(\x)$ and $\x = \exp_{\x_0}(V_x)$ are the solution to the logarithmic and exponential maps respectively.

We train flow charts on this data in two different settings.
The first is an amortized setting over base points, where we train on $B_{\text{train}}$ base points for each dataset, each with $N_{\text{train}}$ associated query points.
During evaluation, we compute $\Rlog$ and $\Rexp$ over a set of $B_{\text{test}}$ held out test base points, each with $N_{\text{test}}$ unseen query points before discarding those without a valid reference solution.

The second is a non-amortized setting where we train a flow chart at a single test base point.
These $B_{\text{single}}=4$ test base points are selected at random (with a fixed random seed) from the $B_{\text{test}}$ held out test base points from each dataset.
For each of these base points, we construct a training set of $N_{\text{single}}$ query points that is separate from its $N_{\text{test}}$ test query points.
On the closed form datasets, we sample $N_{\text{single}}$ new query points at the base point with the sampling procedure of \Cref{sec:synthetic-metrics}, independently of the test query points.
On the synthetic and bio datasets, the training query points are the $N_{\text{single}}$ nearest neighbors of the base point in the training set, selected as in \Cref{app:neighbor-data-selection}.
The base point is a test point and its training query points are training points, so a flow chart never trains on a test query point.
\Cref{tab:ablation-dataset-sizes} lists $N_{\text{single}}$ for each dataset.

For \Cref{fig:accuracy}, we compute $\Rlog$ at each of these four base points using the same test queries as for the amortized flow chart.

\begin{table}[t]
    \centering
    \caption{Sizes of the training and test sets.
    On the synthetic and bio datasets, every point of the training set is a base point and its query points are its $\lfloor 0.1\,B_{\text{train}} \rfloor$ nearest neighbors with the same cluster label.
    The last column is the number of test triplets that remain out of $B_{\text{test}} N_{\text{test}}$ after the discards of \Cref{app:exp-map-experiments}.}
    \label{tab:ablation-dataset-sizes}
    {
    \small
    \setlength{\tabcolsep}{5pt}
    \begin{tabular}{l r r r r r r r}
        \toprule
        Dataset & $n$ & $B_{\text{train}}$ & $N_{\text{train}}$ & $B_{\text{test}}$ & $N_{\text{test}}$ & $N_{\text{single}}$ & Test triplets \\
        \midrule
        Sphere & $2\,/\,30$ & $2{,}048$ & $1{,}024$ & $128$ & $1{,}024$ & $262{,}144$ & $131{,}072$ \\
        Flat torus & $2\,/\,30$ & $2{,}048$ & $1{,}024$ & $128$ & $1{,}024$ & $262{,}144$ & $131{,}072$ \\
        Hyperbolic & $2\,/\,30$ & $2{,}048$ & $1{,}024$ & $128$ & $1{,}024$ & $262{,}144$ & $131{,}072$ \\
        SPD & $3\,/\,28$ & $2{,}048$ & $1{,}024$ & $128$ & $1{,}024$ & $262{,}144$ & $131{,}072$ \\
        Grassmann & $4\,/\,30$ & $2{,}048$ & $1{,}024$ & $128$ & $1{,}024$ & $262{,}144$ & $131{,}072$ \\
        Rotations & $3\,/\,28$ & $2{,}048$ & $1{,}024$ & $128$ & $1{,}024$ & $262{,}144$ & $131{,}072$ \\
        \midrule
        Disk & $2$ & $350{,}000$ & $35{,}000$ & $128$ & $512$ & $35{,}000$ & $65{,}536$ \\
        Two moons & $2$ & $350{,}000$ & $35{,}000$ & $128$ & $512$ & $35{,}000$ & $62{,}221$ \\
        Nested circles & $2$ & $350{,}000$ & $35{,}000$ & $128$ & $512$ & $35{,}000$ & $65{,}041$ \\
        \midrule
        Alanine dipeptide & $30$ & $525{,}000$ & $52{,}500$ & $128$ & $512$ & $52{,}500$ & $65{,}529$ \\
        \bottomrule
    \end{tabular}
    }
\end{table}

\subsubsection{Flow-chart architecture}
\label{app:capacity-architecture}
In all of these experiments, we use a modified version of the conditional neural spline flow of \Cref{sec:amortized-methods}.
We keep the conditioned elementwise scale and shift and the invertible linear map that are placed before each coupling layer, and change the coupling layer and the width of its conditioner.
Each coupling layer transforms an input $v$ into $s\cdot \mathrm{spline}(v) + t$, where the spline is a rational quadratic spline with $8$ bins on $[-10, 10]$, and the $23$ spline parameters, the scale $s$ and the shift $t$ are all predicted by a FiLM conditioned residual network with $3$ blocks and hidden width of $64$.
The scale $s$ is passed through a softplus so that the flow chart is close to the identity transformation at initialization.

In the amortized setting, the base point encoder is a residual network with $8$ gated residual blocks of width $64$ that encodes the base point into a $64$-dimensional feature vector for all coupling layers.
In the non-amortized setting, the base point is fixed, so the base point encoder has a single gated residual block.
We vary the capacity of the amortized flow chart through the number of coupling layers, $L \in \{4, 8, 16, 32\}$, and use $L=32$ for the accuracy comparison at fixed base points.

\subsubsection{Training}
We train every flow chart with the procedure of \Cref{sec:amortized-methods}, with a budget of $2{,}000{,}000$ steps, learning rate decay and no early stopping.
All non-amortized flow charts reported here and all amortized flow charts with $L=32$ reached the full budget.
On two moons, the amortized fits with $L=4$ and $L=8$ diverged after $1{,}154{,}500$ and $1{,}292{,}500$ steps respectively.
We evaluate these flow charts using their best checkpoints selected before divergence.
On the geometries whose cometric is learned with RMM, we fix $\varepsilon = 0.1$.
In both settings, we set $\gamma = 0.1\,r_{\text{max}}/\sqrt{n}$ on the closed form datasets as in \Cref{sec:amortized-methods}, and $\gamma = 0.1\,\bar{r}/\sqrt{n}$ on the synthetic and bio datasets as in \Cref{app:exp-map-experiments}.
In the amortized setting, each training batch contains $16$ base points with $32$ query points each.
In the non-amortized setting there is a single base point, so each training batch contains $512$ query points drawn uniformly from the $N_{\text{single}}$ training points of that base point.
On the closed form datasets, the validation loss in the non-amortized setting is evaluated on a validation set containing $1{,}024$ independently sampled points.
On the synthetic and bio datasets, this validation set consists of the $\lfloor 0.1\,N_{\text{val}} \rfloor$ nearest neighbors of the base point in the validation set, selected as in \Cref{app:neighbor-data-selection}, where $N_{\text{val}}$ is the number of points in the validation set.

\subsubsection{Evaluation}
We evaluate every flow chart in the same way as the flow charts of the experiments above.
On every geometry, we compute the mean of $\Rlog$ or $\Rexp$ across base points as described in \Cref{app:synthetic-experiment-eval-metrics,app:exp-map-experiments}.
On the synthetic and bio datasets, the ground truth normal coordinates of a test triplet are $\s = g_x(\x_0)^{\frac{1}{2}}V_x$.

In the amortized setting, we evaluate the flow chart at all $B_{\text{test}}$ test base points.
In the non-amortized setting, we evaluate each flow chart at its own base point and report the mean over the $B_{\text{single}}$ flow charts.
For the accuracy comparison at fixed base points, we evaluate the flow charts and numerical baselines at the same $B_{\text{single}}$ base points.

\subsubsection{Baselines}
\label{app:ablation-baselines}

\paragraph{Closed form manifolds ODE Solvers}
We run the ODE solvers of \Cref{app:exp-map-experiments} on the closed form manifolds in both low and high dimensions.
All $N_{\text{test}}$ queries of a test base point are solved in a single batch.

\paragraph{Closed form manifolds MFM and GP, GEORCE}
See \Cref{sec:normal-recovery-protocol}.

\paragraph{Data manifolds ODE Solvers}
See \Cref{app:exp-map-experiments}.

\paragraph{Data manifolds MFM and GP, GEORCE}
We run the methods of \Cref{sec:geodesic-interpolation-based-solvers} on the synthetic and bio datasets, and evaluate them as in \Cref{app:synthetic-experiment-eval-metrics} on the test triplets of \Cref{app:exp-map-experiments} that were not discarded.
The batch sizes of the GP and GEORCE are selected from $\{64, 128, 256, 512\}$.

The two MFMs are matched to the flow chart of \Cref{app:exp-map-experiments} on each dataset, with the time measured on the $512$ queries of a test base point, and \Cref{tab:ablation-mfm-sizes} lists their sizes.
They are trained on the same training pairs as that flow chart, with the same training procedure as \Cref{sec:normal-recovery-protocol}.
We train these MFMs for $2{,}000{,}000$ steps with learning rate decay and no early stopping.

\begin{table}[t]
    \centering
    \caption{Sizes of the MFM baselines on the synthetic and bio datasets.}
    \label{tab:ablation-mfm-sizes}
    {
    \footnotesize
    \setlength{\tabcolsep}{3pt}
    \begin{tabular}{l r r r r r r r r r}
        \toprule
        & & \multicolumn{2}{c}{Flow chart} & \multicolumn{3}{c}{MFM, parameter matched} & \multicolumn{3}{c}{MFM, time matched} \\
        \cmidrule(lr){3-4} \cmidrule(lr){5-7} \cmidrule(lr){8-10}
        Dataset & $n$ & Parameters & $\mu$s & Size & Parameters & $\mu$s & Size & Parameters & $\mu$s \\
        \midrule
        Disk & $2$ & $187{,}504$ & $1.85$ & $64 \times 15$ & $188{,}674$ & $2.20$ & $64 \times 12$ & $151{,}042$ & $1.82$ \\
        Two moons & $2$ & $187{,}504$ & $1.92$ & $64 \times 15$ & $188{,}674$ & $2.23$ & $64 \times 12$ & $151{,}042$ & $1.87$ \\
        Nested circles & $2$ & $187{,}504$ & $1.89$ & $64 \times 15$ & $188{,}674$ & $2.22$ & $64 \times 12$ & $151{,}042$ & $1.87$ \\
        \midrule
        Alanine dipeptide & $30$ & $305{,}008$ & $2.34$ & $128 \times 6$ & $309{,}790$ & $2.74$ & $64 \times 7$ & $93{,}726$ & $2.29$ \\
        \bottomrule
    \end{tabular}
    }
\end{table}

\subsubsection{Experiments}

This experiment measures how the errors of a flow chart depend on its capacity, and how much of the error is due to amortization.
For every dataset of \Cref{tab:ablation-dataset-sizes}, we use an amortized flow chart at each $L\in\{4,8,16,32\}$ for the runtime comparisons.
For the accuracy comparison at fixed base points, we use $B_{\text{single}}=4$ non-amortized flow charts with $L=32$, one at each base point.
Every other choice is held fixed across $L$, and a flow chart in the non-amortized setting differs from the amortized flow chart of the same $L$ in its base point encoder and its training data alone.
Each flow chart is trained once, from one seed.

We compare against the baselines of the experiments above, evaluated on the same test triplets.
For the logarithmic map, these are the GP, GEORCE and the two MFMs of \Cref{sec:geodesic-interpolation-based-solvers}, and for the exponential map they are the ODE solvers of \Cref{app:exp-map-experiments}.
The two MFMs are selected to have the same parameter count and time as the flow chart used in the original experiment, and not the size of the ablation models.

\Cref{tab:ablation-parameters} contains the parameter count of the amortized flow chart for each dimension and $L$.

\begin{table}[t]
    \centering
    \caption{Parameter counts of the amortized flow charts used in the capacity sweep for each number of coupling layers $L$.}
    \label{tab:ablation-parameters}
    {
    \footnotesize
    \setlength{\tabcolsep}{3pt}
    \begin{tabular}{l r r r r r}
        \toprule
        Datasets & $n$ & $L = 4$ & $8$ & $16$ & $32$ \\
        \midrule
        Sphere, torus, hyperbolic & $2$ & $319{,}232$ & $533{,}760$ & $962{,}816$ & $1{,}820{,}928$ \\
        SPD, rotations & $3$ & $326{,}616$ & $548{,}464$ & $992{,}160$ & $1{,}879{,}552$ \\
        Grassmann & $4$ & $334{,}016$ & $563{,}200$ & $1{,}021{,}568$ & $1{,}938{,}304$ \\
        SPD, rotations & $28$ & $516{,}416$ & $926{,}464$ & $1{,}746{,}560$ & $3{,}386{,}752$ \\
        Sphere, torus, hyperbolic, Grassmann & $30$ & $532{,}032$ & $957{,}568$ & $1{,}808{,}640$ & $3{,}510{,}784$ \\
        \midrule
        Disk, two moons, nested circles & $2$ & $319{,}232$ & $533{,}760$ & $962{,}816$ & $1{,}820{,}928$ \\
        \midrule
        Alanine dipeptide & $30$ & $532{,}032$ & $957{,}568$ & $1{,}808{,}640$ & $3{,}510{,}784$ \\
        \bottomrule
    \end{tabular}
    }
\end{table}

\subsubsection{Scaling with dimension}
\label{app:dimension-scaling}
\Cref{fig:scaling} shows that the cost of a pretrained flow chart also scales favorably with dimension compared with the numerical baselines.
Across the closed form manifolds that have low and high dimensional variants, the smallest and largest chart sizes only take $1.30$ to $1.63$ times longer to run in high dimensions while the GP grows by $3.0$ to $52\times$ and GEORCE by $3.8$ to $27\times$ (at $K=16$).

\begin{figure}[H]
\centering
\includegraphics[width=0.5\textwidth,trim=9bp 7bp 3bp 5.5bp,clip]{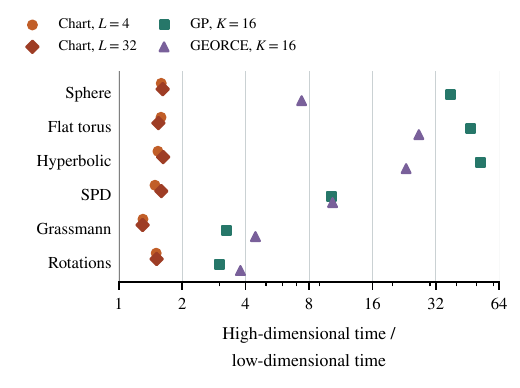}
\caption{Increase in time per logmap query from the low dimensional to the high dimensional version of each manifold.
Each method uses the same architecture or knot count in both dimensions.}
\label{fig:scaling}
\end{figure}

\subsection{Matching and solver resolution}
\label{app:matching}
To compare runtimes at matched accuracy, we set the error threshold $\mathcal{R}_\star$ to the larger of the minimum mean errors in the two sweeps.
The mean error is the mean of $\Rlog$ or $\Rexp$ across base points for the logarithmic or exponential map respectively.
Within each sweep, we select the fastest setting whose mean error is at most $\mathcal{R}_\star$.
We report the speedup as the runtime of the baseline divided by that of the flow chart.
This comparison uses the test errors to select settings and describes their measured performance.
We do not interpolate between settings.
To compare errors at matched cost, we fix the $L=32$ flow chart and select the ODE setting with the lowest error among those whose runtime does not exceed that of the chart.
\Cref{tab:matched-settings} lists the settings selected for the comparisons in \Cref{fig:speed}.

On the two dimensional sphere, several ODE settings have very large finite errors, and the errors do not consistently decrease as the number of steps increases.
The matched comparison in \Cref{fig:speed} uses Heun with $64$ steps.
These results show instability on this geometry, but we have not identified its cause.
The conclusions about the exponential map in high dimensions do not depend on this geometry.

\begin{table}[H]
\centering
\caption{Settings selected for the runtime comparisons in \Cref{fig:speed}.
For each numerical method, $L$ is the number of coupling layers in the chart used in that comparison.
$K$ is the number of knots for GP or GEORCE, and $S$ is the number of ODE steps.
Each comparison selects the fastest chart and numerical settings whose mean normalized RMSE is at most $\mathcal{R}_\star$.}
\label{tab:matched-settings}
\small
\begin{tabular*}{\textwidth}{@{\extracolsep{\fill}}lrrrrrlr@{}}
\toprule
& \multicolumn{4}{c}{Logarithmic map} & \multicolumn{3}{c}{Exponential map} \\
\cmidrule(lr){2-5}\cmidrule(l){6-8}
& \multicolumn{2}{c}{GP} & \multicolumn{2}{c}{GEORCE} & \multicolumn{3}{c}{ODE} \\
\cmidrule(lr){2-3}\cmidrule(lr){4-5}\cmidrule(l){6-8}
Geometry & $L$ & $K$ & $L$ & $K$ & $L$ & Solver & $S$ \\
\midrule
Sphere 2 & 16 & 16 & 4 & 32 & 16 & Heun & 64 \\
Flat torus 2 & 8 & 16 & 4 & 32 & 16 & Tsit5 & 2 \\
Hyperbolic 2 & 8 & 16 & 4 & 32 & 16 & Tsit5 & 2 \\
SPD 3 & 16 & 16 & 4 & 32 & 32 & Dopri8 & 1 \\
Grassmann 4 & 32 & 16 & 4 & 32 & 32 & Dopri8 & 1 \\
Rotations 3 & 32 & 16 & 4 & 32 & 32 & Tsit5 & 2 \\
\midrule
Sphere 30 & 32 & 8 & 32 & 32 & 32 & Heun & 8 \\
Flat torus 30 & 32 & 16 & 16 & 32 & 32 & Dopri8 & 2 \\
Hyperbolic 30 & 32 & 8 & 32 & 32 & 32 & Tsit5 & 2 \\
SPD 28 & 32 & 8 & 32 & 32 & 32 & Heun & 8 \\
Grassmann 30 & 32 & 8 & 32 & 32 & 32 & Dopri8 & 1 \\
Rotations 28 & 32 & 8 & 32 & 16 & 32 & Tsit5 & 2 \\
\midrule
Disk & 8 & 16 & 4 & 32 & 4 & Dopri8 & 1 \\
Two moons & 16 & 16 & 4 & 32 & 16 & Tsit5 & 2 \\
Nested circles & 4 & 16 & 4 & 32 & 16 & Dopri8 & 2 \\
Alanine dipeptide & 32 & 8 & 32 & 32 & 32 & Heun & 3 \\
\bottomrule
\end{tabular*}
\end{table}

\subsubsection{Exponential map errors and runtime budget}
\label{app:expmap-results}
For the exponential map, the flow chart with $L=32$ achieves a mean $100\times\Rexp$ between $0.15$ and $2.51$ across the seven geometries with $n\geq28$, including $1.99\pm0.29$ on alanine dipeptide.
We also compare flow charts and ODE solvers when the ODE solvers are limited to the runtime of the $L=32$ chart.
Within this budget, the mean $\Rexp$ of the most accurate solver is $6.1$--$195$ times that of the flow chart on the high dimensional closed form manifolds, and $3.5$ times that of the flow chart on alanine dipeptide.
The solvers can reach lower errors if given more time.
\Cref{fig:full-expmap} gives the errors and runtimes at every chart size and ODE step count.

\subsection{Flow chart preconditioner}
\label{app:chart-preconditioner}
For \Cref{tab:preconditioner}, we use the pretrained flow charts with four coupling layers described in \Cref{app:models}.
For each base point, we express the metric in the chart coordinates using \Cref{eq:metric-change-of-variables}, encode each query, and run GEORCE between the origin and the encoded query.
We map the initial velocity back to data coordinates through $Df_{\x_0}^{\theta}(\zero)$ to obtain the logmap estimate.
We use the same knot counts and stopping criterion as GEORCE in data coordinates.
We report the mean of $\Rlog$ across base points and the ratio of the mean errors at each knot count.
GEORCE successfully returns a solution for every query in this comparison in both data and chart coordinates.

At $K=32$, the mean of $\Rlog$ is reduced by $2.3$--$3.6\times$ on the six high dimensional closed form manifolds, and by $2.2$--$5.9\times$ on the data geometries.
The improvement holds at all four knot counts, with factors of $2.2$--$8.1$ across the $40$ settings.
This suggests that the flow chart learns a space in which the numerical errors associated with GEORCE are reduced.
We compare the methods at equal knot counts, so these evaluations do not establish a runtime improvement.

\clearpage
\section{Complete experimental results}
\label{app:complete-results}
We summarize the results over all geometries and tested settings in three figures.
\Cref{fig:full-logmap,fig:full-expmap} show the accuracy and time per query for the logarithmic and exponential maps respectively.
\Cref{fig:full-capacity} shows the effect of flow chart capacity and amortization.

\begin{figure}[H]
\centering
\includegraphics[width=\textwidth]{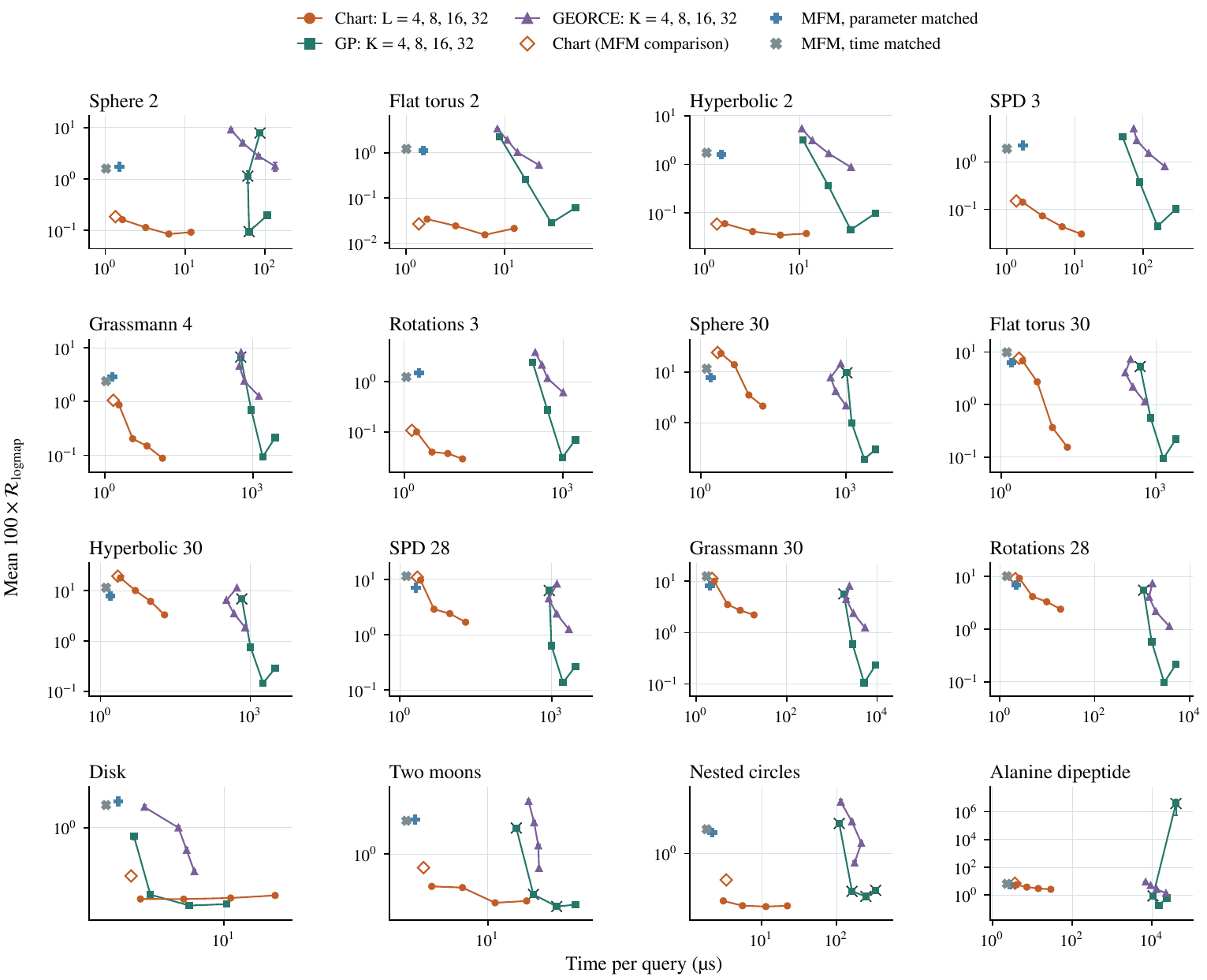}
\caption{Logarithmic map accuracy and runtime across all sixteen geometries.
Each point gives the mean of $100\times\Rlog$, with one standard error across base points.
The points for the four flow chart sizes and the four knot counts of the GP and GEORCE are joined by lines.
The open diamond denotes the smaller flow chart used for the MFM comparison, shown alongside both MFM baselines.
Both axes use logarithmic scales, with ranges chosen separately for each geometry.
A black cross marks a setting with failed queries, which are excluded when computing its error.}
\label{fig:full-logmap}
\end{figure}

\clearpage
\begin{figure}[H]
\centering
\includegraphics[width=\textwidth]{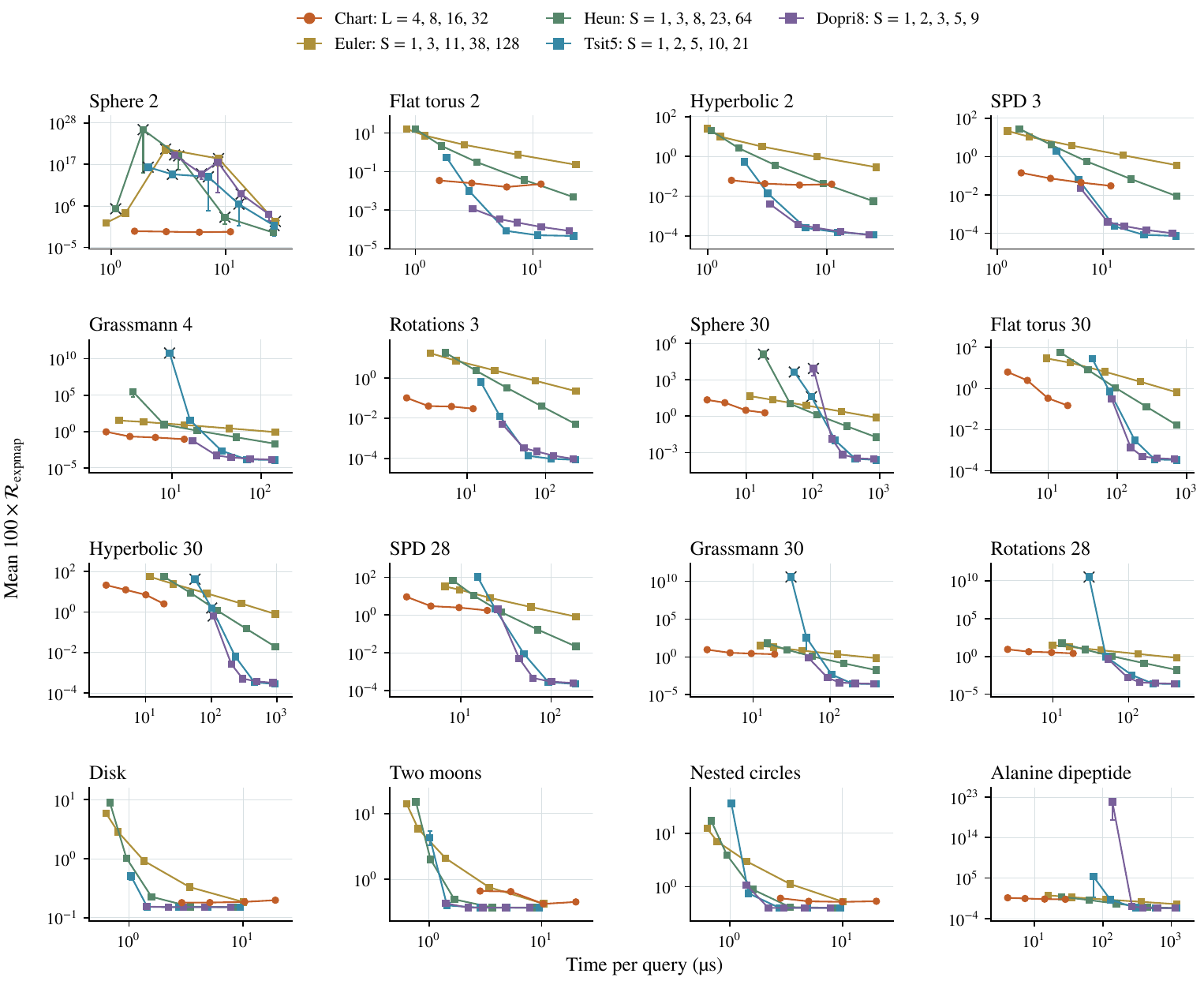}
\caption{Exponential map accuracy and runtime across all sixteen geometries.
Each point gives the mean of $100\times\Rexp$, with one standard error across base points.
The curves show every flow chart size and ODE step count listed in the legend.
Both axes use logarithmic scales, with ranges chosen separately for each geometry.
A black cross marks a setting with nonfinite endpoints, which are excluded when computing its error.
We retain all finite errors.
The large errors on the two dimensional sphere show that several ODE settings are unstable on this geometry.
All flow chart queries with valid references return finite endpoints.}
\label{fig:full-expmap}
\end{figure}

\clearpage
\begin{figure}[H]
\centering
\includegraphics[width=\textwidth]{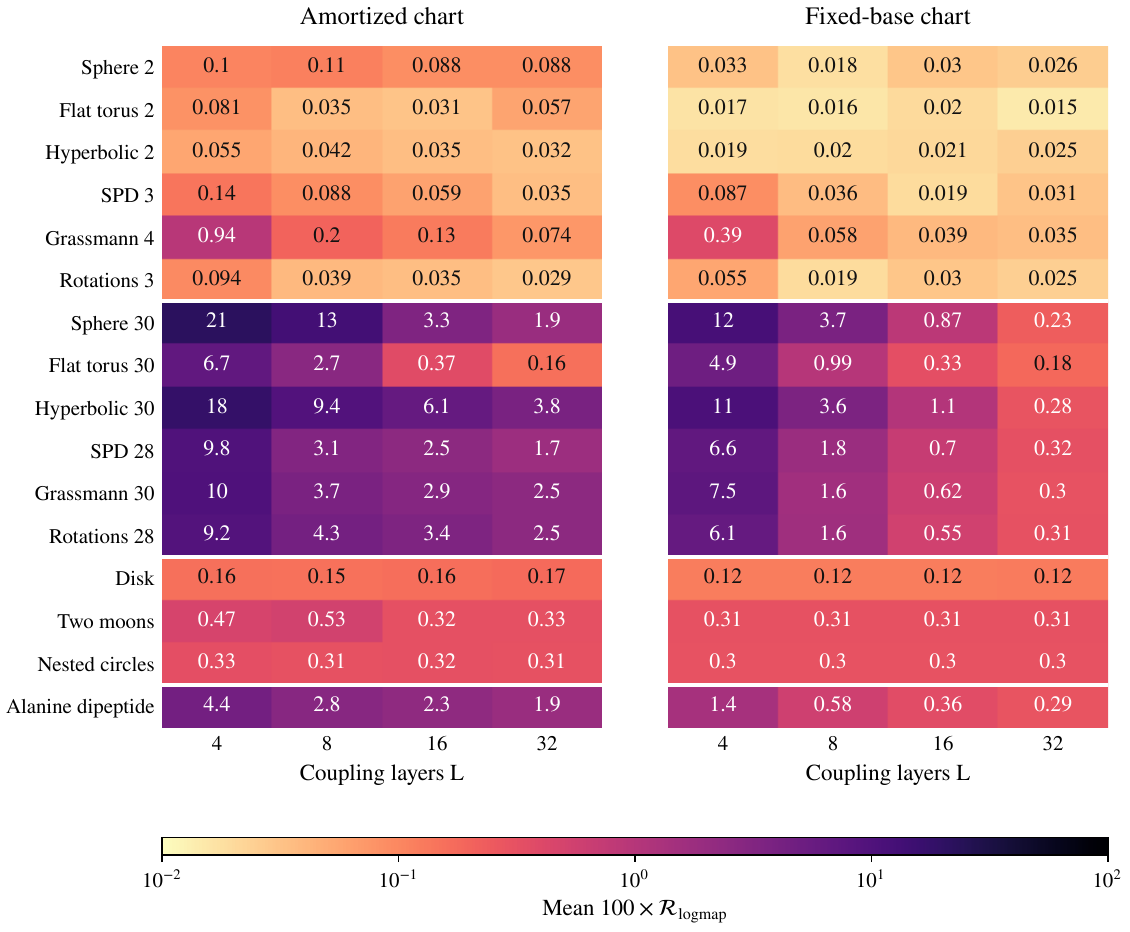}
\caption{Effect of flow chart capacity and amortization.
The panels show the mean of $100\times\Rlog$ for amortized and non-amortized flow charts, evaluated on the same four test base points at each chart size.
The two panels share a logarithmic color scale.}
\label{fig:full-capacity}
\end{figure}

\end{document}